\newtheorem{definition}{Definition}
\newtheorem{theorem}{Theorem}
\newtheorem{proposition}{Proposition}
\newtheorem{corollary}{Corollary}
\title{Counterexample Guided Abstraction Refinement with Non-Refined Abstractions for Multi-Agent Path Finding}
\author{
   Pavel Surynek
   \affiliations
   Faculty of Information Technology,
   Czech Technical University in Prague\\
   Th\'{a}kurova 9, 160 00 Praha 6, 
   Czechia
   \emails
   pavel.surynek@fit.cvut.cz
}
\author{
First Author$^1$
\and
Second Author$^2$\and
Third Author$^{2,3}$\And
Fourth Author$^4$
\affiliations
$^1$First Affiliation\\
$^2$Second Affiliation\\
$^3$Third Affiliation\\
$^4$Fourth Affiliation
\emails
\{first, second\}@example.com,
third@other.example.com,
fourth@example.com
}
\begin{document}

\maketitle

\begin{abstract}
Counterexample guided abstraction refinement (CEGAR) represents a powerful symbolic technique for various tasks such as model checking and reachability analysis. Recently, CEGAR combined with Boolean satisfiability (SAT) has been applied for multi-agent path finding (MAPF), a problem where the task is to navigate agents from their start positions to given individual goal positions so that the agents do not collide with each other.

The recent CEGAR approach used the initial abstraction of the MAPF problem where collisions between agents were omitted and were eliminated in subsequent abstraction refinements. We propose in this work a novel CEGAR-style solver for MAPF based on SAT in which some abstractions are deliberately left non-refined. This adds the necessity to post-process the answers obtained from the underlying SAT solver as these answers slightly differ from the correct MAPF solutions. Non-refining however yields order-of-magnitude smaller SAT encodings than those of the previous approach and speeds up the overall solving process making the SAT-based solver for MAPF competitive again in relevant benchmarks.
\end{abstract}

\noindent
{\bf Keywords:} multi-agent pathfinding (MAPF), counterexample example guided abstraction refinement (CEGAR), Boolean satisfiability (SAT)

\section{Introduction}
Multi-agent path finding (MAPF) \cite{DBLP:conf/aiide/Silver05,DBLP:journals/jair/Ryan08,DBLP:conf/icra/Surynek09,DBLP:journals/jair/WangB11,DBLP:journals/ai/SharonSGF13} is a task of finding non-conflicting paths for $k \in \mathbb{N}$ agents $A=\{a_1,a_2,...,a_k\}$ that move in an undirected graph $G=(V,E)$ across its edges such that each agent reaches its goal vertex from the given start vertex via its path. Starting configuration of agents is defined by a simple assignment $s: A \rightarrow V$ and the goal configuration is defined by a simple assignment $g: A \rightarrow V$. A conflict between agents is usually defined as simultaneous occupancy of the same vertex by two or more agents or as a traversal of an edge by agents in opposite directions. Although MAPF started as purely theoretical studies of graph pebbling and puzzle solving \cite{DBLP:conf/focs/KornhauserMS84,DBLP:conf/aaai/RatnerW86,DBLP:journals/jsc/RatnerW90}, it has grown into artificial intelligence mainstream topic with a significant impact on many fields including warehouse logistics \cite{DBLP:conf/atal/LiTKDKK20}.

Many problems in robotics \cite{DBLP:conf/robovis/ChudyPS20,DBLP:journals/ras/WenLL22,DBLP:conf/iros/GharbiCS09}, urban traffic optimization \cite{DBLP:conf/socs/AtzmonDR19,DBLP:conf/atal/HoSGGCP19}, FPGA circuit design \cite{DBLP:conf/sbac-pad/NerySG17}, and computer games \cite{DBLP:conf/cig/SigurdsonB0HK18} can be regarded from the perspective of MAPF as listed by various surveys including \cite{DBLP:conf/socs/FelnerSSBGSSWS17,DBLP:journals/corr/0001KA0HKUXTS17}.

Particular aspect that is motivated by practice and makes MAPF challenging is the need to find the optimal solutions with respect to some cumulative cost \cite{DBLP:conf/aaai/YuL13}. Commonly used cumulative costs in MAPF are {\em makespan} and {\em sum-of-costs}\cite{DBLP:conf/raai/Stern19}. The makespan corresponds to the length of the longest agent's path. The sum-of-costs is the sum of costs of individual paths which corresponds to the sum of unit costs of actions, the wait actions including. An example of MAPF problem and its sum-of-costs optimal solution is shown in Figure \ref{figure-MAPF}.

%NRF-SAT, NRF-CBS, NRF-MAPF, NCEGAR-MAPF, SMT-CBS, NRF-SAT, NRF-CEGAR-MAPF

\begin{figure}[h]
    \centering
    \includegraphics[trim={4.0cm 22.8cm 3.5cm 3.0cm},clip,width=0.5\textwidth]{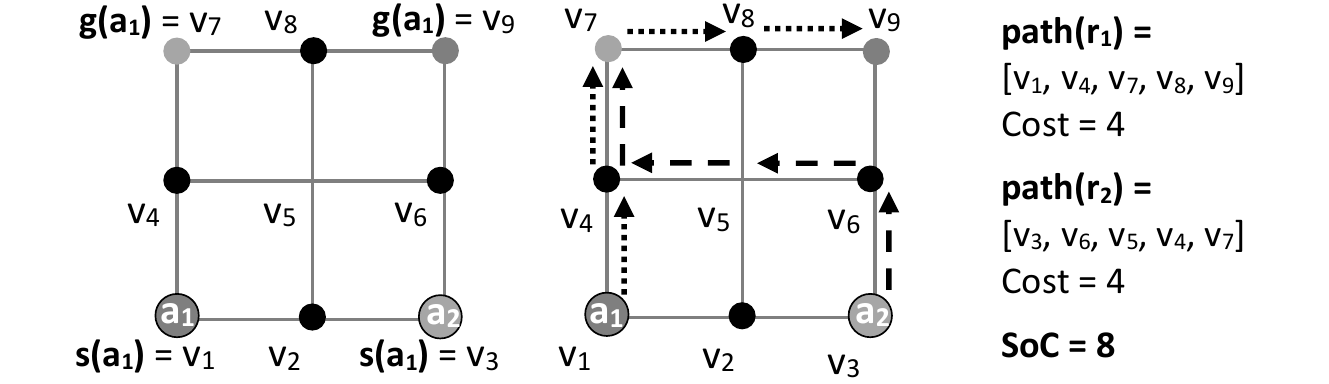}
    %\vspace{-0.8cm}
    \caption{Multi-agent path finding (MAPF) with agents $a_1$ and $a_2$.}
    \label{figure-MAPF}
\end{figure}

We address MAPF from the perspective of compilation techniques that represent a major alternative to search-based solvers \cite{DBLP:conf/aiide/Silver05,Wagner-2011-7385,DBLP:journals/ai/SharonSGF13,DBLP:journals/ai/SharonSFS15} for MAPF. Compilation-based solvers reduce the input MAPF instance to an instance in a different well established formalism for which an efficient solver exists. Such formalisms are for example {\em constraint programming} (CSP) \cite{DBLP:books/daglib/0016622,DBLP:conf/icra/Ryan10}, {\em Boolean satisfiability} (SAT) \cite{biere2021handbook,DBLP:conf/ictai/Surynek12}, or {\em mixed integer linear programming} (MILP) \cite{rader2010deterministic,DBLP:conf/ijcai/LamBHS19}.

The basic compilation scheme for sum-of-costs optimal MAPF solving has been introduced by the MDD-SAT \cite{DBLP:conf/ecai/SurynekFSB16} solver that uses so called {\em complete models} to compile MAPF instances into SAT (see Figure \ref{figure-COMPILATION}). The target Boolean formula of the complete model is satisfiable if and only if the input MAPF has a solution of a specified sum-of-costs. The complete model as introduced in MDD-SAT consists of three group of constraints:

\begin{itemize}
\item {\bf Agent propagation} constraints - these constraints ensure that if an agent appears in vertex $v$ at time step $t$ then the agent appears in some neighbor of $v$ (including $v$) at time step $t+1$. The side effect of these constraints is that the agent never disappears. Cost calculation and bounding is done together with agent propagation.
\item {\bf Path consistency} constraints - these constrains ensure that agents move along proper paths, that is, agents do not multiply and do not appear spontaneously.
\item {\bf Conflict elimination} constraints - to ensure that agents do not conflict with each other according to the MAPF rules (vertex and edge conflict).
\end{itemize}

\begin{figure}[h]
    \centering
    \includegraphics[trim={2.1cm 22.5cm 3.5cm 3.0cm},clip,width=0.5\textwidth]{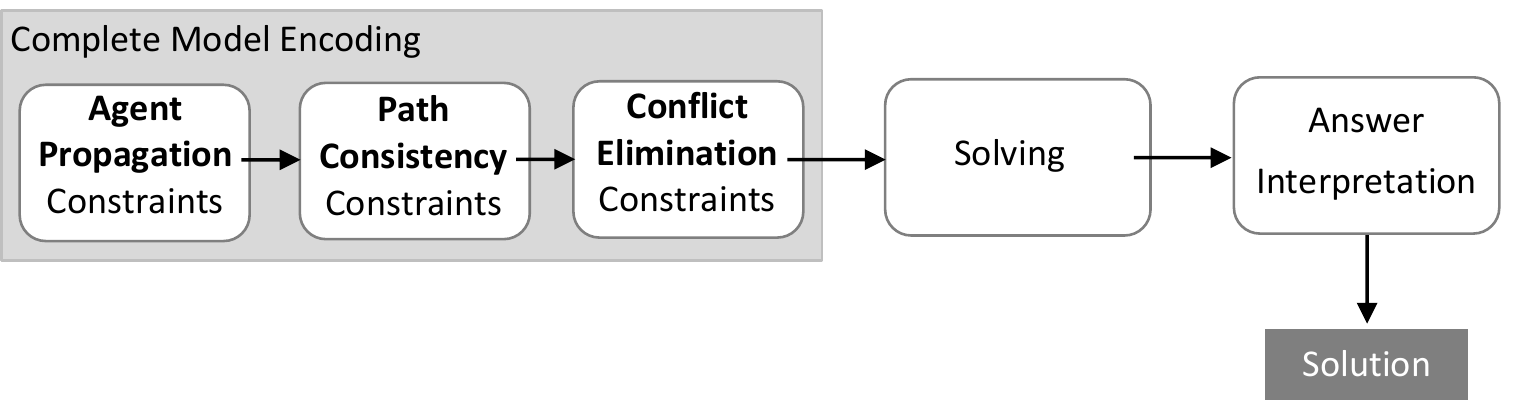}
    %\vspace{-0.8cm}
    \caption{Schematic diagram of the basic MAPF compilation with complete model.}
    \label{figure-COMPILATION}
\end{figure}

A significant improvement over complete models in problem compilation for MAPF is the introduction of laziness via {\em incomplete models} in the CSP-based LazyCBS \cite{DBLP:conf/aips/GangeHS19}, SAT-based SMT-CBS \cite{DBLP:conf/ijcai/Surynek19}, and MILP-based BCP \cite{DBLP:conf/ijcai/LamBHS19}. These solvers use incomplete models of MAPF where the conflict elimination constraints are omitted for which equivalent solvability no longer holds, but only the implication: if the MAPF instance is solvable then the instance in the target formalism is solvable too.

The discrepancy between the original formulation of MAPF and its compiled variant in the target formalism is eliminated by abstraction refinements similarly as it is done in the {\em counterexample guided abstraction refinement} (CEGAR) \cite{DBLP:conf/cav/ClarkeGJLV00} approach for model checking (see Figure \ref{figure-CEGAR}). Specifically in MAPF the counterexamples are represented by conflicts between agents and their refinements correspond to elimination of these conflicts.

\subsection{Contribution}

We further generalize the CEGAR approach for MAPF by further omitting the {\em path consistency} constraints. Hence only agent propagation constraints remain in the {\em initial abstraction}. In addition to this, in the abstraction refinement we do not try to refine with respect to all the omitted constraints (path consistency and conflict elimination). Instead we only generate counterexamples for conflicts and refine the conflicts while path consistency remains non-refined - the abstraction refinement in our case is inherently incomplete. To mitigate the impact of non-refined path consistency constraints we add a new step in the CEGAR compilation architecture in which we post-process the answer from the SAT solver.

The omitted path consistency constraints lead to finding directed acyclic sub-graphs (DAGs) that connects agents' initial and goal positions instead of proper paths. The desired path for agents need to be extracted from the sub-graphs in the new post-processing step.

Our contribution can be understood in a broader perspective as a generalization how problems are compiled. In the classic approaches such as SATPlan \cite{DBLP:conf/ecai/KautzS92} the problem solution is read from the assignment of decision variables directly. In our approach the assignment of decision variables does not represent the problem solution directly, but the solution needs to be extracted from the assignment by non-trivial process, though this process should be fast (polynomial time) to keep the problem compilation viable.

\section{Background}

One of the first uses of problem compilation can be seen in the SATPlan algorithm \cite{DBLP:conf/ecai/KautzS92,DBLP:conf/ijcai/KautzS99} for classical planning \cite{DBLP:books/daglib/0014222}. A Boolean formula that is satisfiable if and only if a plan of specified length exists is constructed and checked for satisfiability by the SAT solver. To guarantee finding a plan of the minimum length, the SATPlan planner iteratively consults formulae encoding the existence of a plan of length $l_0$, $l_0+1$, $l_0+2$,... where $l_0$ is a lower bound for the plan length, until the first satisfiable formula is found.

Similar approach has been used in SAT-based solvers for MAPF such as MDD-SAT \cite{DBLP:conf/ecai/SurynekFSB16} where the SAT solver is consulted about the existence of a solution for the given MAPF instance of specified sum-of-cost $\mathit{SoC}$. To ensure finding sum-of-costs optimal solution for solvable MAPF, MDD-SAT checks using the SAT solver existence of solutions for $\mathit{SoC}_0$, $\mathit{SoC}_0 + 1$, $\mathit{SoC}_0 + 2$, ... until the first positive answer which due to monotonicity of existence of solutions w.r.t. bounded sum-of-costs is guaranteed to correspond to the optimal sum-of-cost.

\subsection{The MAPF Encoding as SAT}

The complete model for the sum-of-costs optimal MAPF as SAT built-in the MDD-SAT solver uses Boolean decision variables inspired by {\em direct encoding} \cite{DBLP:conf/cp/Walsh00} and {\em multi-valued decision diagrams} \cite{DBLP:conf/cp/AndersenHHT07}. We briefly summarize the complete model as introduced by MDD-SAT.

The decision variables need to represent all possible paths of agents such that their sum-of-costs is at most given value $\mathit{SoC} > 0$. Since it is possible for an agent to visit a single vertex multiple times, a so-called {\em time expansion} \cite{DBLP:journals/amai/Surynek17} of the underlying graph $G$ is constructed for each agent denoted $\mathit{TEG}_i$.

$\mathit{TEG}_i$ is defined for a given maximum length of individual agent's path $T \in \mathbb{N}$ consisting of $T+1$ copies of vertices of $G$ called {\em layers} indexed by $0,1,...,T$, where $\{(v_1,t),(v_2,t),...,(v_n,t)\}$ are nodes of the $t$-th layer of $\mathit{TEG}_i$. The layers correspond to individual time-steps and are interconnected by directed edges that model possible transitions of agents, that is there is a directed edge $[(v_j,t),(v_{j'},t+1)]$ whenever there is an edge $\{v_j,v_{j'}\} \in E$. Directed edges $[(v_j,t),(v_{j'},t+1)]$ are added to model wait actions of agents.

The following proposition establishes the correspondence between the actual paths traveled by agents in $G$ \footnote{The proper terminology for agents' paths should be trajectories or just sequences of vertices as a vertex can visited multiple times by the agent which does not correspond to the standard graph terminology where a path is a simple sequence of vertices.} and directed paths in TEGs.

\begin{proposition}
Any path of length at most $T$ of agent $a_i$ going from $s(a_i)$ to $g(a_i)$ is represented by a directed path in $\mathit{TEG}_i$ connecting $(s(a_i),0)$ and $(g(a_i),T)$.
\label{prop-TEG}
\end{proposition}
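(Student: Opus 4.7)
The plan is to prove this by an explicit construction: lift an agent's path in $G$, indexed by discrete time steps, to a directed path in the time-expansion graph $\mathit{TEG}_i$ by attaching the appropriate time index to each vertex, and then, if necessary, pad the resulting sequence with wait transitions at $g(a_i)$ so that it terminates in layer $T$.

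First, I would fix an arbitrary path of length $\ell \le T$ for agent $a_i$ and write it as a sequence $u_0, u_1, \ldots, u_\ell$ of vertices with $u_0 = s(a_i)$ and $u_\ell = g(a_i)$, where each consecutive pair $(u_t, u_{t+1})$ corresponds either to a movement along an edge $\{u_t, u_{t+1}\} \in E$ or to a wait action, in which case $u_t = u_{t+1}$. This interpretation matches the time-step semantics on which $\mathit{TEG}_i$ is built, and it is the natural notion of length (number of time steps / actions) used throughout the MDD-SAT encoding.

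Next, I would define the candidate directed path in $\mathit{TEG}_i$ as $(u_0,0), (u_1,1), \ldots, (u_\ell,\ell)$. By the very construction of $\mathit{TEG}_i$, for each $t \in \{0,\ldots,\ell-1\}$ there is a directed edge $[(u_t,t),(u_{t+1},t+1)]$: if $u_t \neq u_{t+1}$ it is the move edge induced by $\{u_t,u_{t+1}\}\in E$, and if $u_t = u_{t+1}$ it is the wait edge added by construction. If $\ell = T$ we are done, since the sequence already ends at $(g(a_i),T)$. If $\ell < T$, I would append $(g(a_i),\ell+1),(g(a_i),\ell+2),\ldots,(g(a_i),T)$, each consecutive pair being connected by a wait edge. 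Concatenating the two pieces yields a directed path in $\mathit{TEG}_i$ from $(s(a_i),0)$ to $(g(a_i),T)$, as required.

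I do not expect a serious obstacle here, since the statement is essentially a correctness-by-construction claim about the encoding: the content is just bookkeeping. The only subtlety worth stressing is the distinction between movement edges and wait edges in $\mathit{TEG}_i$, and the observation that wait edges at the goal are exactly what allow us to pad a short path up to the full horizon $T$ without losing the endpoint $g(a_i)$. Once these two points are spelled out, the verification that each consecutive pair of nodes in the constructed sequence is connected by a directed edge of $\mathit{TEG}_i$ is immediate from the definition.
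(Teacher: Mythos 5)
Your construction is correct and is exactly the argument the proposition is implicitly relying on: the paper itself states Proposition~\ref{prop-TEG} without proof, treating it as an immediate consequence of the $\mathit{TEG}_i$ construction, and your lift-and-pad argument (attach time indices to get distinct timed nodes, use move edges for traversals and wait edges both for in-path waits and for padding at $g(a_i)$ up to layer $T$) is the standard justification. No gap; the only point you could add for completeness is that the lifted sequence is automatically a simple directed path because the time indices make all its nodes distinct even when the agent revisits a vertex of $G$, which is precisely what the time expansion buys.
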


Having the proposition, we can speak about a representation of agent's path (trajectory) in TEG.

Since not all nodes $(v_j,t)$ of $\mathit{TEG}_i$ are actually reachable considering the maximum agent's path length $T$ and its start $s(a_i)$ and goal $g(a_i)$ because either $v_j$ is farther than $t$ steps from $s(a_i)$ or farther than $T-t$ steps from $g(a_i)$, such nodes be can pruned from $\mathit{TEG}_i$ without compromising the representation of all relevant paths. $\mathit{TED}_i$ after pruning unreachable nodes is called a {\em multi-valued decision diagram} for agent $a_i$ and is denoted $\mathit{MDD}_i$ (nodes and edges of $\mathit{MDD}_i$ are denoted $\mathit{MDD}_i.V$ and $\mathit{MDD}_i.E$ respectively). Example MDD is shown in Figure \ref{figure-MDD}. The complete model for MAPF in the MDD-SAT solver is derived from MDDs.

The decision variable denoted $\mathcal{X}_{i,v_j}^t$ is introduced for each node $(v_j,t) \in \mathit{MDD}_i$ and expresses that agent $a_i$ appears in vertex $v_j$ at time step $t$ \footnote{Auxiliary variables $\mathcal{E}_{i,v_j,v_{j'}}^t$ to express that agent $a_i$ moves from $v_j$ to $v_{j'}$ between time-steps $t$ and $t+1$ can be used as well, but we will base our description only on the $\mathcal{X}_{i,v_j}^t$ variables as the $\mathcal{E}_{i,v_j,v_{j'}}^t$ variables are direct consequence of of the $\mathcal{X}_{i,v_j}^t$ variables: $\mathcal{E}_{i,v_j,v_{j'}}^t  \leftrightarrow \mathcal{X}_{i,v_j}^t \wedge \mathcal{X}_{i,v_{j'}}^{t+1}$.}.

\begin{figure}[h]
    \centering
    \includegraphics[trim={3.0cm 22.2cm 4.5cm 3.5cm},clip,width=0.5\textwidth]{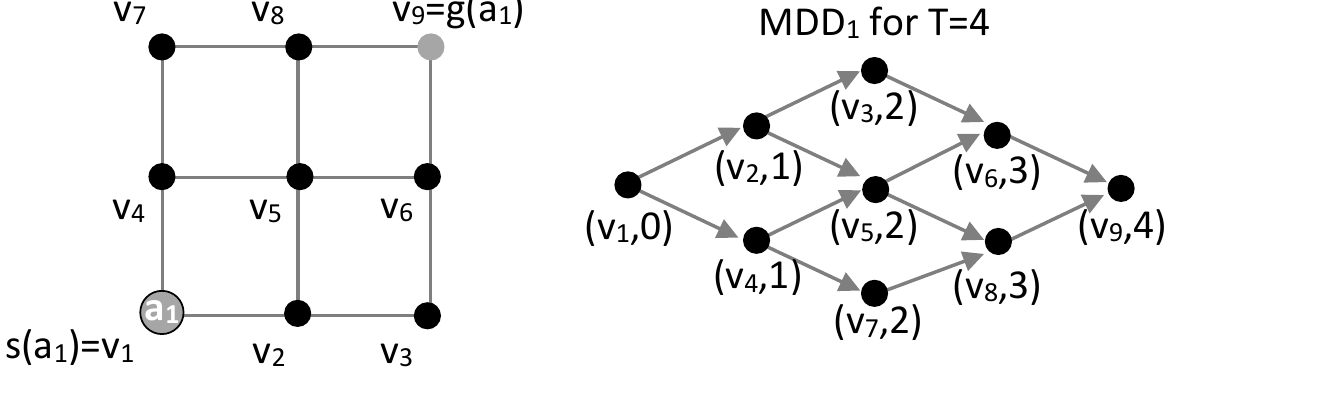}
    %\vspace{-0.8cm}
    \caption{Multi-valued decision diagram for the path length $T=4$.}
    \label{figure-MDD}
\end{figure}

The three groups of constraints expressed on top of the $\mathcal{X}_{i,v_j}^t$ variables are as follows:

\subsubsection{Agent Propagation Constraints}

\begin{equation}
 {  \mathcal{X}_{i,v_j}^t \rightarrow \bigvee_{j'\;|\;[(v_j,t);(v_{j'},t+1)] \in \mathit{MDD}_i.E}{\mathcal{X}_{i,v_{j'}}^{t+1}}
 }
 \label{eq-prop-1}
\end{equation}

This constraint is introduced for each $a_i$, $v_j$, and $t$ and specifies that agent must proceed to the next level in MDD. In addition, to this we account among these constraints also calculation and bounding of the cost. For each time-step $t$ in MDD an auxiliary variable is introduced which is $\mathit{TRUE}$ if and only if the agent performed an action. The cost bounding over auxiliary variables is carried out through {\em cardinality constraints} encoded as Boolean circuits into the formula \cite{DBLP:conf/cp/BailleuxB03,DBLP:conf/cp/SilvaL07}.

\begin{equation}
 {  \mathcal{X}_{i,s(a_i)}^0 \wedge \bigwedge_{j\;|\;v_j \neq s(a_i)}{\neg \mathcal{X}_{i,v_j}^0}
 }
 \label{eq-prop-2}
\end{equation}

\begin{equation}
 {  \mathcal{X}_{i,g(a_i)}^T \wedge \bigwedge_{j\;|\;v_j \neq g(a_i)}{\neg \mathcal{X}_{i,v_j}^T}
 }
 \label{eq-prop-3}
\end{equation}

These equations are introduced for each agent $a_i$ and ensure that agent $a_i$ starts in vertex $s(a_i)$ at time-step 0 and finished in its goal vertex $g(a_i)$ at time-step $T$.

\subsubsection{Path Consistency Constraints}

\begin{equation}
 {  \sum_{j \;|\; (v_j,t) \in \mathit{MDD}_i.V}{\mathcal{X}_{i,v_j}^t} = 1
 }
 \label{eq-cons-1}
\end{equation}

This constraint is introduced for each $a_i$ and $t$ and specifies that agent can appear in exactly one vertex at a time.

\subsubsection{Conflict Elimination Constraints}

\begin{equation}
 {  \sum_{i \;|\; (v_j,t) \in \mathit{MDD}_i.V}{\mathcal{X}_{i,v_j}^t} \leq 1
 }
 \label{eq-conf-1}
\end{equation}

This constraint is introduced for each $v_j$ and $t$ and specifies that at most one agent can reside in vertex $v_j$ at time $t$. The constraint eliminates vertex conflicts, edge conflicts can be eliminated analogously.

As some of the constraint are defined by pseudo-Boolean expression, proper translation to CNF is needed which most notably concerns the {\em at-most-one} constraints \cite{Chen2010ANS,DBLP:conf/soict/NguyenM15}. The combination of {\em pair-wise} encoding and {\em sequential counter} are used in the MDD-SAT and SMT-CBS solvers.

The formula collecting the above constraints is being built for the specific {\em sum-of-costs} $\mathit{SoC}$ and is denoted $\mathcal{F}(\mathit{SoC})$.
The completeness of the model encoded by $\mathcal{F}(\mathit{SoC})$ can be summarized as follows \cite{DBLP:conf/ecai/SurynekFSB16}:

\begin{proposition}
The input MAPF has a solution of the given sum-of-costs $\mathit{SoC} \Leftrightarrow \mathcal{F}(\mathit{SoC})$ is satisfiable.
\end{proposition}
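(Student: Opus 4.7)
The plan is to prove the biconditional by establishing both directions separately, with the key correspondence being the one given by Proposition~\ref{prop-TEG}: agent trajectories in $G$ are in bijection with directed paths in the MDDs. Throughout, I will treat the $\mathcal{X}_{i,v_j}^t$ variables as indicator variables of agent positions and interpret each constraint group as enforcing one piece of MAPF semantics.

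For the forward direction ($\Rightarrow$), I would take a MAPF solution of sum-of-costs $\mathit{SoC}$ and assume without loss of generality that each agent's trajectory has been extended by wait actions at its goal to length $T$ (where $T$ is the makespan compatible with $\mathit{SoC}$ used to construct the MDDs). For each agent $a_i$ and time $t$, let $v^t_i$ be the vertex occupied in the trajectory, and set $\mathcal{X}_{i,v^t_i}^t=\mathit{TRUE}$ and all other $\mathcal{X}_{i,v_j}^t=\mathit{FALSE}$. By Proposition~\ref{prop-TEG} the visited nodes lie on a directed path of $\mathit{MDD}_i$, so every transition instantiated this way satisfies equation~(\ref{eq-prop-1}); the endpoint conditions (\ref{eq-prop-2})--(\ref{eq-prop-3}) hold by construction; the "exactly one vertex" requirement (\ref{eq-cons-1}) holds because a trajectory visits one vertex per step; and absence of vertex/edge conflicts in the MAPF solution gives (\ref{eq-conf-1}). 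The auxiliary action-indicator variables can be set by their defining equivalences, and the cardinality constraint encoding their sum is satisfiable precisely because the total number of non-wait actions equals the sum-of-costs of the solution, which is at most $\mathit{SoC}$.

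For the backward direction ($\Leftarrow$), I would start from a satisfying assignment of $\mathcal{F}(\mathit{SoC})$ and extract a trajectory per agent. Constraint~(\ref{eq-cons-1}) guarantees that for each $a_i$ and $t$ there is a unique vertex $v^t_i$ with $\mathcal{X}_{i,v^t_i}^t = \mathit{TRUE}$, so the sequence $v^0_i,\ldots,v^T_i$ is well defined. Constraint~(\ref{eq-prop-1}) forces $[(v^t_i,t),(v^{t+1}_i,t{+}1)] \in \mathit{MDD}_i.E$, hence consecutive vertices are either equal (wait) or adjacent in $G$; together with (\ref{eq-prop-2})--(\ref{eq-prop-3}) this yields a valid trajectory from $s(a_i)$ to $g(a_i)$, again invoking Proposition~\ref{prop-TEG}. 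The conflict constraints~(\ref{eq-conf-1}) and their edge-conflict counterparts certify that these trajectories jointly form a valid MAPF plan, and correctness of the cardinality-encoded cost bound forces the total cost to be at most $\mathit{SoC}$.

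The routine bookkeeping is in the forward direction. The main obstacle I expect is the cost accounting: one must argue that the Boolean-circuit encoding of the cardinality constraint faithfully realizes the arithmetic inequality $\sum_{i,t} \mathit{action}_{i,t} \le \mathit{SoC}$ in both directions. I would handle this by citing the standard correctness of the sequential-counter/sorting-network encodings of \cite{DBLP:conf/cp/BailleuxB03,DBLP:conf/cp/SilvaL07} and noting that the auxiliary action variables are fully determined by the $\mathcal{X}_{i,v_j}^t$ variables via the equivalence $\mathcal{E}_{i,v_j,v_{j'}}^t \leftrightarrow \mathcal{X}_{i,v_j}^t \wedge \mathcal{X}_{i,v_{j'}}^{t+1}$ mentioned in the footnote, so the SAT-level bound translates to the intended sum-of-costs bound on MAPF solutions. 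A small secondary subtlety is aligning "solution of sum-of-costs $\mathit{SoC}$" (as used in the monotone search over $\mathit{SoC}_0,\mathit{SoC}_0{+}1,\ldots$) with the inequality encoded by the cardinality constraint, which is resolved by the standard monotonicity argument already invoked in the background section.
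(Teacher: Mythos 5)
Your proposal is correct and follows essentially the same route as the paper, which only sketches the argument in one sentence: a correspondence between satisfying assignments of $\mathcal{F}(\mathit{SoC})$ and directed paths in the MDDs, lifted to agents' trajectories via Proposition~\ref{prop-TEG}. Your write-up is simply a fleshed-out version of that sketch, with the cost-bounding bookkeeping (delegated to the standard correctness of the cardinality encodings) handled appropriately.
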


The core of the proof of the proposition is a correspondence of satisfying assignments of $\mathcal{F}(\mathit{SoC})$ and directed paths in MDDs which in turn due to Proposition \ref{prop-TEG} establishes a correspondence between agents' paths and satisfying assignments of $\mathcal{F}(\mathit{SoC})$.

\subsection{CEGAR for MAPF}

The next step in compilation-based MAPF solving is represented by the integration of ideas from {\em counterexample guided abstraction and refinement} (CEGAR)  \cite{DBLP:conf/cav/ClarkeGJLV00,DBLP:conf/cade/Clarke03}. The two representative solvers SMT-CBS \cite{DBLP:conf/ijcai/Surynek19} based on SAT and Lazy-CBS \cite{DBLP:conf/aips/GangeHS19} based on CSP resolve conflicts between agents as done in the CEGAR approach (though the authors of these MAPF solvers do not explicitly mention CEGAR).

\begin{figure}[h]
    \centering
    \includegraphics[trim={2.7cm 20.2cm 2.8cm 2.9cm},clip,width=0.5\textwidth]{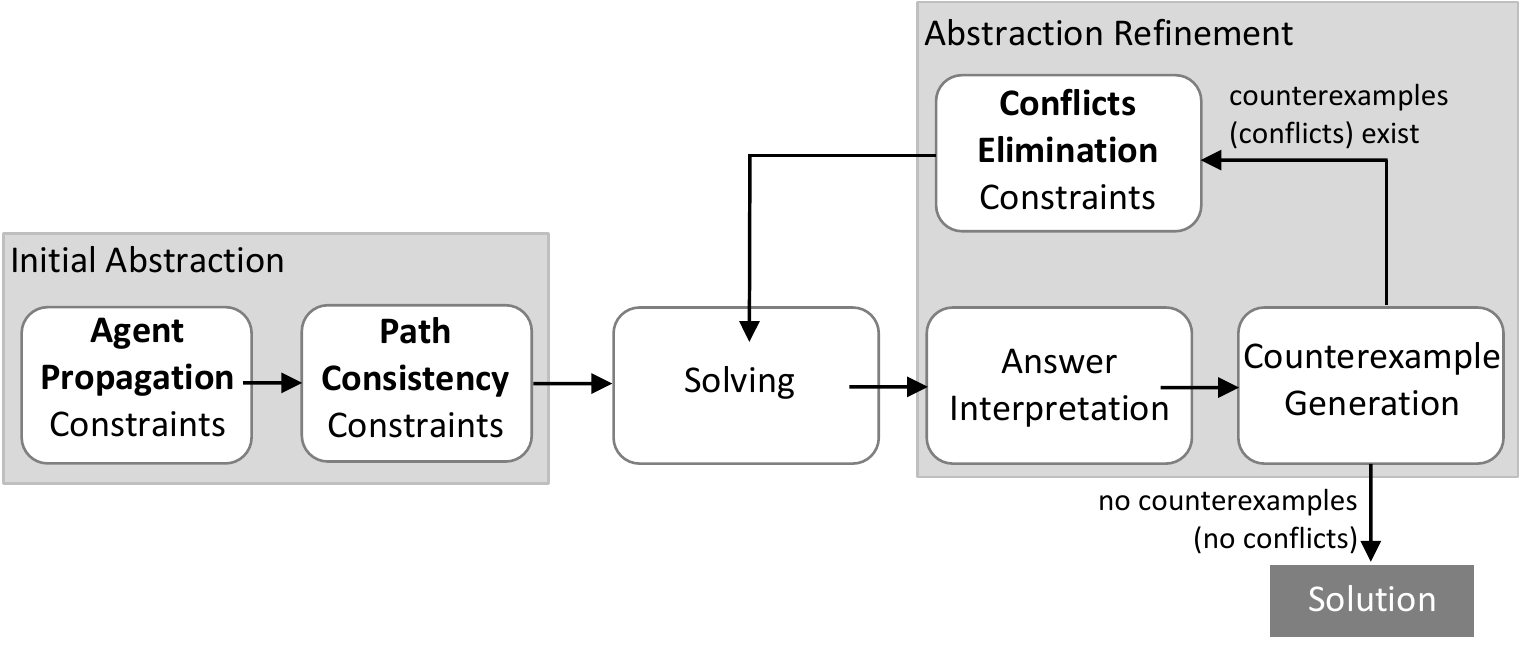}
    %\vspace{-0.8cm}
    \caption{Schematic diagram of counterexample guided abstraction refinement (CEGAR) for MAPF. Conflicts between agents are treated as counterexamples and eliminated in the abstraction refinement loop. }
    \label{figure-CEGAR}
\end{figure}

The general CEGAR approach for compilation-based problem solving starts with a so called {\em initial abstraction} of the problem instance being solved in some target formalism such as SAT \cite{biere2021handbook} or CSP \cite{DBLP:books/daglib/0016622}. The initial abstraction do not model the input instance in the full details. However still the initial abstraction is passed to the solver for the target formalism despite the solver is not provided all the details needed to solve it. Then the solver will come with some answer and since it could not take some details into account during the solving phase, the answer must be checked, usually against full details of how the problem instance is defined. Two cases need to be distinguished at this stage. If the provided answer matches the instance definition then it is returned and the solving process finishes. Otherwise the CEGAR solving process generates {\em counterexample} that is determined by the mismatch between the provided answer and the requirements the expected answer should satisfy. This mismatch is usually represented by the violation of some constraints that were not expressed in the abstraction. Then the solving process continues with a so called {\em abstraction refinement} in which the abstraction is augmented to eliminate the counterexample and the solving process continues with the next iteration of (now refined) abstraction solving.

To keep the problem solving in the CEGAR approach sound, the abstractions must fulfill certain basic requirements such as if the problem instance is solvable then any of its abstractions should be solvable as well (the opposite does not hold: the solvable abstraction does not necessarily imply that the input instance is solvable).

The CEGAR approach for MAPF as implemented in SMT-CBS and Lazy-CBS is illustrated in Figure \ref{figure-CEGAR}. The initial abstraction in both algorithms models existence of paths for individual agents but omits the requirement to avoid conflicts between agents. The abstraction refinement hence must check the answers of the target solver against the definition of conflicts in MAPF. If a conflict is found then the abstraction is refined so that the conflict is eliminated.

The abstraction refinement for a conflict, say between agents $a_i$ and $a_j$ in vertex $v$ at time-step $t$, is done in the case of CSP-based Lazy-CBS by adding a fresh finite domain variable $p_{v,t} \in A$ that indicates what agent occupies vertex $v$ at time step $t$. The need to assign the variable $p_{v,t}$ a single value eliminates the conflict.

The SAT-based SMT-CBS algorithm refines the conflict by adding a new constraint over the existing Boolean variables that forbids the occupation of vertex $v$ at time-step $t$ by agents $a_i$ and $a_j$ simultaneously:  $\neg \mathcal{X}_{i,v}^{t} \vee \neg \mathcal{X}_{j,v}^{t}$.

The edge conflicts and potentially different kinds of conflicts in MAPF and its variants \cite{DBLP:conf/ijcai/AndreychukYAS19,DBLP:journals/algorithmica/BonnetMR18} can be treated analogously.

The abstraction at any point in SMT-CBS and Lazy-CBS forms a so called {\em incomplete model} for MAPF. Unlike the complete model from MDD-SAT, the equivalent-solvability of the model in the target formalism and the input MAPF instance does not hold for the incomplete MAPF model. Let us express this property formally for the SAT-based formulation. Let $\mathcal{F'}(\mathit{SoC})$ be the formula obtained at some stage in CEGAR loop of SMT-CBS used for answering the question whether there is a solution of the input MAPF instance of the given sum-of-costs $\mathit{SoC}$. Then the following proposition holds \cite{DBLP:conf/ijcai/Surynek19}:

\begin{proposition}
The input MAPF has a solution of the given sum-of-costs $\mathit{SoC}$ $\Rightarrow$ $\mathcal{F'}(\mathit{SoC})$ is satisfiable.
\end{proposition}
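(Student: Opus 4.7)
The plan is to prove the implication by explicit construction of a satisfying assignment for $\mathcal{F'}(\mathit{SoC})$ from a given MAPF solution, leveraging the earlier Proposition \ref{prop-TEG} that relates agent paths in $G$ to directed paths in the MDDs. The central observation I would rely on is that $\mathcal{F'}(\mathit{SoC})$ is structurally a weakening of the complete model $\mathcal{F}(\mathit{SoC})$: its agent-propagation and path-consistency constraints are identical, and its conflict-elimination constraints form only a subset of those contained in $\mathcal{F}(\mathit{SoC})$ (namely, those generated so far by the CEGAR refinement loop).

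First I would fix a sum-of-costs $\mathit{SoC}$-optimal (or $\mathit{SoC}$-bounded) MAPF solution, i.e.\ a tuple of paths $\pi_1,\ldots,\pi_k$ in $G$ such that $\pi_i$ leads from $s(a_i)$ to $g(a_i)$ and no two $\pi_i,\pi_j$ conflict in the vertex/edge sense. Using Proposition \ref{prop-TEG}, each $\pi_i$ can be identified with a directed path from $(s(a_i),0)$ to $(g(a_i),T)$ in $\mathit{MDD}_i$. Then I would define the truth assignment $\varphi$ by setting $\mathcal{X}_{i,v_j}^t$ to $\mathit{TRUE}$ exactly when the directed path representing $\pi_i$ passes through the node $(v_j,t)$, with all other $\mathcal{X}$-variables set to $\mathit{FALSE}$, and assigning the auxiliary action/cardinality variables in the forced way this induces.

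Next I would verify the three groups of constraints in turn. Equations \eqref{eq-prop-1}--\eqref{eq-prop-3} hold because consecutive nodes along the MDD path are joined by edges in $\mathit{MDD}_i.E$, and the endpoints agree with $s(a_i),g(a_i)$. Equation \eqref{eq-cons-1} holds because a directed path visits exactly one node per layer. The cardinality/cost bounding is satisfied because the total number of non-wait actions along the paths is precisely the sum-of-costs, which is at most $\mathit{SoC}$. Finally, for every conflict-elimination clause appearing in $\mathcal{F'}(\mathit{SoC})$ (of the form \eqref{eq-conf-1} or its edge-conflict analogue, produced by some previous refinement step), the absence of vertex and edge conflicts in the MAPF solution immediately implies that at most one $\mathcal{X}_{i,v_j}^t$ among the constrained variables is $\mathit{TRUE}$, so the clause is satisfied.

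I expect no deep obstacle: the argument is essentially the ``easy direction'' of the completeness proposition for $\mathcal{F}(\mathit{SoC})$, restricted to the weaker formula $\mathcal{F'}(\mathit{SoC})$, and in fact one may simply invoke the previous proposition to get a model of $\mathcal{F}(\mathit{SoC})$ and observe that any such model satisfies the syntactically smaller $\mathcal{F'}(\mathit{SoC})$. The only mildly delicate point is bookkeeping around the auxiliary variables used by the cardinality encoding, which must be set consistently with the action-indicator semantics so that the cost-bounding circuit evaluates to $\mathit{TRUE}$; this is handled by defining them as the forced functional values of the $\mathcal{X}$-variables under $\varphi$, exactly as in the completeness argument for MDD-SAT.
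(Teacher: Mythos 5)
Your argument is correct. Note that the paper itself does not prove this proposition at all---it is imported verbatim from the SMT-CBS paper with a citation---so there is no in-paper proof to compare against; your construction is the standard one that the cited work relies on. The cleanest route is indeed the one you mention at the end: the clauses of $\mathcal{F'}(\mathit{SoC})$ are a subset of (or are logically implied by) those of the complete model $\mathcal{F}(\mathit{SoC})$, since the refinement loop only ever adds pairwise conflict clauses of the form $\neg \mathcal{X}_{i,v}^{t} \vee \neg \mathcal{X}_{j,v}^{t}$ that the full at-most-one constraints (\ref{eq-conf-1}) subsume, so the completeness proposition for $\mathcal{F}(\mathit{SoC})$ immediately yields a satisfying assignment for $\mathcal{F'}(\mathit{SoC})$; your longer explicit construction via Proposition~\ref{prop-TEG} is also sound, including the bookkeeping for the cardinality-circuit auxiliaries.
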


The incompleteness property also represents the requirement that keeps the CEGAR approach sound for MAPF as it says that $\mathcal{F'}(\mathit{SoC})$ is an abstraction for MAPF.

\section{Non-Refined Abstractions in MAPF}

We are going further in the CEGAR architecture of the MAPF solver. In addition to {\em conflict elimination} constrains we also omit {\em path consistency} constraints in the initial abstraction. Moreover we never make any refinement with respect to the omitted {\em path consistency} constraints - the corresponding abstraction remains {\bf non-refined}.

We call the new algorithm based on the non-refined abstractions {\em Non-Refined SAT} or {\bf NRF-SAT} in short. The pseudo-code of NRF-SAT is shown as Algorithm \ref{alg-NRF-SAT}.

The high-level function of the algorithm NRF-SAT-MAPF() is analogous to SATPlan or MDD-SAT main loop where search for the optimal sum-of-costs is done by trying to answer questions whether there is a solution of MAPF of a specified sum-of-costs $\mathit{SoC}$ (lines 5-6). To answer these questions, the algorithm uses low level function NRF-SAT-Bounded() that implements the CEGAR architecture with non-refined abstractions as shown in Figure \ref{figure-NRF-CEGAR}.

As the satisfying assignment of formula $\mathcal{F''}$ representing the incomplete model in NRF-SAT does not correspond to paths for agents (line 15) further post-processing of the SAT solver's answer is needed (line 16). As we will see later, the answer obtained directly from the SAT solver can be interpreted as special directed acyclic graph (DAG) that connects agent's start vertex with its goal vertex. Hence the post-processing consists in extraction of proper agent's path from the DAG.

The rest of the low-level loop (lines 17-22) represents abstraction refinements with respect to conflicts between agents. Let us note that conflicts being discovered are collected and reused in the next iteration of the high-level loop.

\begin{figure}[h]
    \centering
    \includegraphics[trim={2.4cm 20.2cm 2.4cm 3.0cm},clip,width=0.5\textwidth]{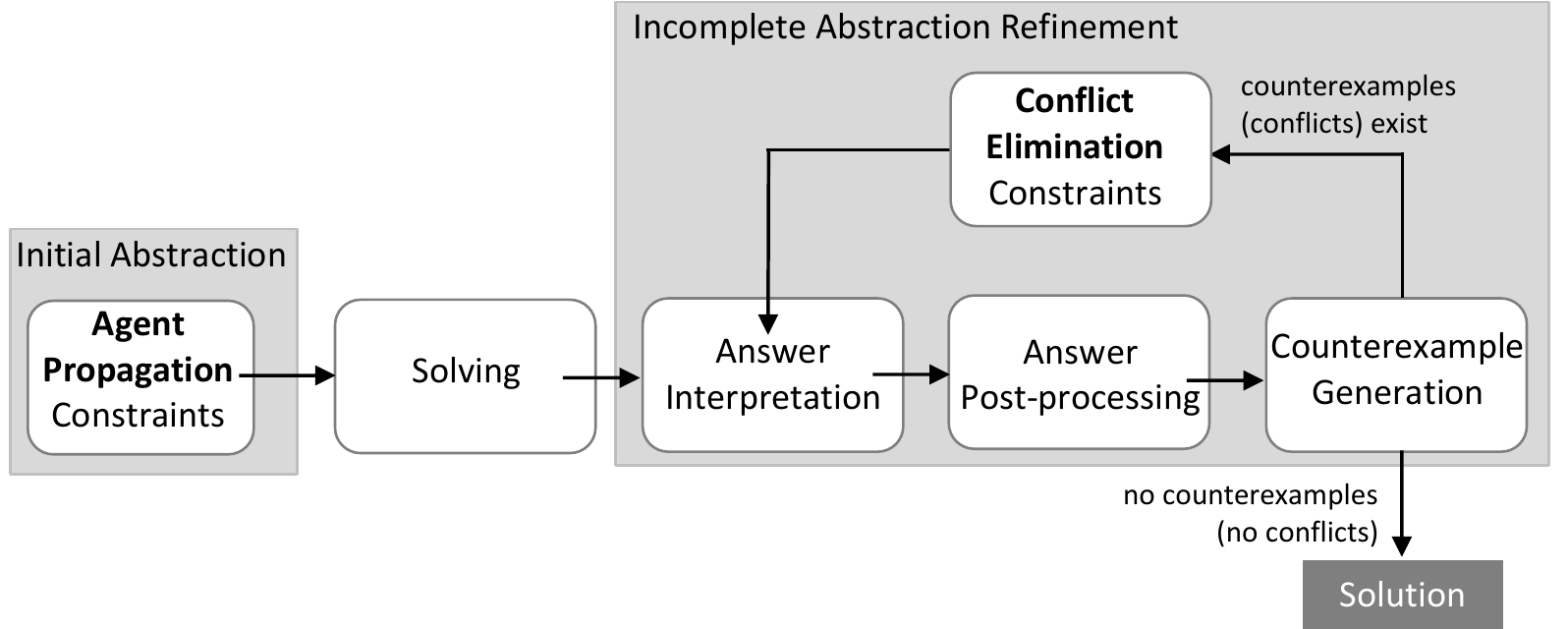}
    %\vspace{-0.8cm}
    \caption{Schematic diagram of CEGAR problem solver for MAPF with non-refined abstractions.}
    \label{figure-NRF-CEGAR}
\end{figure}

The NRF-SAT algorithm is sound and optimal, more precisely it returns a sum-of-costs optimal solution for a solvable input MAPF instance. The following series of propositions shows the claim.

\begin{definition}
$\mathit{DAG}_i$ is a directed acyclic sub-graph of $\mathit{MDD}_i$ such that $(g(a_i),T) \in \mathit{DAG}_i.V$ and there exists a directed path from any $(v_j,t) \in \mathit{DAG}_i.V$ to $(g(a_i),T)$.
\end{definition}

\begin{proposition}
The interpretation of satisfying assignment of $\mathcal{F''}(\mathit{SoC})$ at any stage of the NRF-SAT algorithm corresponds to $\mathit{DAG}_i$ such that $(s(a_i), 0) \in \mathit{DAG}_i.V$.
\label{prop-dag}
\end{proposition}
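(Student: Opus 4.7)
The plan is to fix a satisfying assignment $\alpha$ of $\mathcal{F''}(\mathit{SoC})$ at an arbitrary stage of the algorithm and, for each agent $a_i$, to construct $\mathit{DAG}_i$ explicitly as the subgraph of $\mathit{MDD}_i$ whose node set is $\{(v_j,t) \in \mathit{MDD}_i.V \mid \alpha(\mathcal{X}_{i,v_j}^t)=\mathit{TRUE}\}$, and whose edge set consists of those $[(v_j,t),(v_{j'},t+1)] \in \mathit{MDD}_i.E$ for which both endpoints are true under $\alpha$. Acyclicity is automatic because every edge of $\mathit{MDD}_i$ goes from layer $t$ to layer $t+1$, so no directed cycle can exist in any subgraph. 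The task then reduces to checking the two vertex-membership conditions and the reachability-to-goal condition in the definition of $\mathit{DAG}_i$.

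First I would handle the boundary conditions. The start-layer clauses in equation~\eqref{eq-prop-2} force $\alpha(\mathcal{X}_{i,s(a_i)}^0)=\mathit{TRUE}$, giving $(s(a_i),0)\in \mathit{DAG}_i.V$, and the goal-layer clauses in equation~\eqref{eq-prop-3} force $\alpha(\mathcal{X}_{i,g(a_i)}^T)=\mathit{TRUE}$ while falsifying every other variable on layer $T$, giving $(g(a_i),T)\in \mathit{DAG}_i.V$ and also ensuring $(g(a_i),T)$ is the \emph{unique} true node in layer $T$.

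The core of the argument is a backward induction on $T-t$ establishing that from any $(v_j,t)\in \mathit{DAG}_i.V$ there is a directed path in $\mathit{DAG}_i$ to $(g(a_i),T)$. The base case $t=T$ follows from the uniqueness just noted, so the path is trivial. For the inductive step, suppose $(v_j,t)\in \mathit{DAG}_i.V$ with $t<T$. Since $\alpha$ satisfies the agent propagation clause in equation~\eqref{eq-prop-1} for $a_i, v_j, t$, at least one $(v_{j'},t{+}1)$ adjacent to $(v_j,t)$ in $\mathit{MDD}_i$ is also true under $\alpha$, hence belongs to $\mathit{DAG}_i.V$ together with the connecting edge. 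The inductive hypothesis supplies a directed path from $(v_{j'},t{+}1)$ to $(g(a_i),T)$, which prepended with this edge gives the required path from $(v_j,t)$.

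The main subtlety to address, rather than a deep obstacle, is that the statement is claimed at \emph{any} stage of NRF-SAT, not only at the initial abstraction. I would therefore point out that every refinement added during the CEGAR loop is a conflict-elimination clause of the form $\neg \mathcal{X}_{i,v}^{t}\vee \neg \mathcal{X}_{j,v}^{t}$ (or an analogous edge-conflict clause) over the existing $\mathcal{X}$ variables; such clauses only prune satisfying assignments and never weaken equations~\eqref{eq-prop-1}, \eqref{eq-prop-2}, or \eqref{eq-prop-3}, which are the only ingredients of the inductive argument above. Hence the construction of $\mathit{DAG}_i$ and its properties are preserved under all refinements produced by the algorithm, establishing the proposition.
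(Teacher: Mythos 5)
Your proof is correct and follows essentially the same route as the paper: the propagation constraints \eqref{eq-prop-1} force every true node to reach $(g(a_i),T)$, while \eqref{eq-prop-2} and \eqref{eq-prop-3} pin down the start and goal nodes; your backward induction on $T-t$ and your remark that refinement clauses only prune assignments simply make rigorous what the paper states informally. The only difference in scope is that the paper's proof also argues the converse direction (any $\mathit{DAG}_i$ induces a satisfying assignment of the propagation constraints), which is not literally demanded by the proposition but is what the paper later leans on to show that $\mathcal{F''}(\mathit{SoC})$ is an abstraction of MAPF.
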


\begin{proof}
Assume that Boolean decision variables of $\mathcal{X}_{i,v_j}^t$ are set to $\mathit{TRUE}$ to reflect the choice of vertices in $\mathit{MDD}_i$ by a given $\mathit{DAG}_i$. Then the agent propagation constraints (\ref{eq-prop-1}) and (\ref{eq-prop-3}) are satisfied since they directly correspond to existence of paths towards $(g(a_i),T)$ from any node selected by $\mathit{DAG}_i$ which is satisfied by the definition of  $\mathit{DAG}_i$. Conversely, any assignment of Boolean decision variables that satisfies constraints  (\ref{eq-prop-1}) and (\ref{eq-prop-3}) corresponds to $\mathit{DAG}_i$ since any setting of $\mathcal{X}_{i,v_j}^t$ to $\mathit{TRUE}$ must be propagated via (\ref{eq-prop-1}) and (\ref{eq-prop-3}) towards $\mathcal{X}_{i,g(a_i)}^T$. In addition to this, the constraint (\ref{eq-prop-2}) ensures that $(s(a_i),0) \in \mathit{DAG}_i.V$.
\end{proof}

The immediate corollary of Proposition \ref{prop-dag} and the definition of $\mathit{DAG}_i$ is as follows:

\begin{corollary}
There exists a directed path connecting $(s(a_i),0)$ and $(g(a_i),T)$ in $\mathit{DAG}_i$ for each agent $a_i$ obtained from satisfying assignment of $\mathcal{F''}(\mathit{SoC})$.
\end{corollary}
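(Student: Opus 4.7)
The plan is to invoke the two facts already established and simply chain them. By Proposition \ref{prop-dag}, any satisfying assignment of $\mathcal{F''}(\mathit{SoC})$ induces, for every agent $a_i$, a sub-graph $\mathit{DAG}_i$ of $\mathit{MDD}_i$ with the property that $(s(a_i),0) \in \mathit{DAG}_i.V$. The definition of $\mathit{DAG}_i$ then guarantees, for every node $(v_j,t) \in \mathit{DAG}_i.V$, the existence of a directed path inside $\mathit{DAG}_i$ from $(v_j,t)$ to $(g(a_i),T)$.

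Therefore my proof would proceed in two short steps. First, I would invoke Proposition \ref{prop-dag} to conclude that $(s(a_i),0)$ belongs to $\mathit{DAG}_i.V$. Second, I would instantiate the universally quantified reachability clause in the definition of $\mathit{DAG}_i$ with the particular node $(v_j,t) = (s(a_i),0)$, which yields a directed path from $(s(a_i),0)$ to $(g(a_i),T)$ inside $\mathit{DAG}_i$, exactly as required.

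There is essentially no obstacle here: the statement is advertised as an ``immediate corollary'' and indeed the two ingredients line up verbatim. The only thing worth being careful about is to make the quantifier instantiation explicit, since the reader should see that the definition of $\mathit{DAG}_i$ is applied precisely at the start node that Proposition \ref{prop-dag} hands us. No induction, case analysis, or appeal to properties of the underlying graph $G$ is needed; the argument is purely a logical composition of the preceding definition and proposition.
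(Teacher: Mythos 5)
Your proof is correct and follows exactly the route the paper intends: the paper presents this as an immediate consequence of Proposition \ref{prop-dag} (which puts $(s(a_i),0)$ into $\mathit{DAG}_i.V$) combined with the reachability clause in the definition of $\mathit{DAG}_i$, instantiated at that start node. Your explicit quantifier instantiation is the only content of the argument, and it matches the paper's (unwritten) reasoning verbatim.
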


\begin{algorithm}[t]
\begin{footnotesize}
\SetKwBlock{NRICL}{NRF-SAT-MAPF($\mathcal{M}=(G,A,s,g)$)}{end} \NRICL{
    $\mathit{SoC} \gets$ lower-Bound($\mathcal{M}$)\\
    $\mathit{conflicts} \gets \emptyset$\\
    \While {$\mathit{TRUE}$} {   
        $(\mathit{paths,conflicts}) \gets$ \\ NRF-SAT-Bounded($\mathcal{M}$, $\mathit{SoC}$,$\mathit{conflicts}$)\\
        \If {$\mathit{paths} \neq \mathit{UNSAT}$}{
            \Return $\mathit{paths}$
        }
        $\mathit{SoC} \gets \mathit{SoC} + 1$\\
     }
}

\SetKwBlock{NRICL}{NRF-SAT-Bounded($\mathcal{M}$,$\mathit{SoC}$,$\mathit{conflicts}$)}{end} \NRICL{
      $\mathcal{F''} \gets$ build-Initial-Abstraction($\mathcal{M}$,$\mathit{SoC}$,$\mathit{conflicts}$)\\
	\While {$\mathit{TRUE}$} {        
          $\mathit{assignment} \gets$ consult-SAT-Solver($\mathcal{F''}$)\\
          
          \If {$\mathit{assignment} \neq \mathit{UNSAT}$}{
            $\mathit{DAGs} \gets$ interpret($\mathcal{M}$,$\mathit{assignment}$)\\
            $\mathit{paths} \gets$ extract-Paths($\mathcal{M}$,$\mathit{DAGs}$)\\           
            $\mathit{conflicts'} \gets$ validate($\mathcal{M}$,$\mathit{paths})$\\
            \If{$\mathit{conflicts'} = \emptyset$}{
                \Return $(\mathit{paths}, \mathit{conflicts})$
            }
                       
            \For{each $c \in \mathit{conflicts'}$}{
              $\mathcal{F'' }\gets \mathcal{F''} \cup$ eliminate-Conflict($c$)\\
            }
            $\mathit{conflicts} \gets \mathit{conflicts} \cup \mathit{conflicts'}$
          }
          \Return {($\mathit{UNSAT}$,$\mathit{conflicts}$)}\\
      }
}

\caption{NRF-SAT: MAPF solving via CEGAR with non-refined abstractions.} \label{alg-NRF-SAT}
\end{footnotesize}
\end{algorithm}

This directed path will be extracted from the SAT solver answer during the answer post-processing step as illustrated in Figure \ref{figure-path-extraction}.

Additional constraints that bound the cost and those that eliminate conflicts included during abstraction refinements restrict the set of DAGs that can correspond to satisfying assignments of $\mathcal{F''}(\mathit{SoC})$.

The important property of $\mathit{DAG}_i$ that directly follows from its definition is that a path for agent $a_i$ of length $T$ going from its start vertex $s(a_i)$ to its goal $g(a_i)$ can be represented as a $\mathit{DAG}_i$ (we only add time indices to vertices visited by the agent along the path to obtain the $\mathit{DAG}_i$). Hence $\mathcal{F''}(\mathit{SoC})$ is an incomplete model (an abstraction) for MAPF:

\begin{proposition}
The input MAPF has a solution of the given sum-of-costs $\mathit{SoC} \Rightarrow \mathcal{F''}(\mathit{SoC})$ is satisfiable.
\end{proposition}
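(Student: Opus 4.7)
The plan is to lift a given MAPF solution into a satisfying assignment of $\mathcal{F''}(\mathit{SoC})$ by exploiting the observation already noted just before the proposition: a single agent path of length $T$ trivially satisfies the definition of $\mathit{DAG}_i$ (it is a directed acyclic sub-graph of $\mathit{MDD}_i$ in which every vertex has a unique successor leading to $(g(a_i),T)$). Since Proposition \ref{prop-dag} describes exactly what the satisfying assignments of the propagation part of $\mathcal{F''}(\mathit{SoC})$ look like, the natural route is to construct one such DAG per agent from the solution, set the corresponding $\mathcal{X}_{i,v_j}^t$ variables, and then verify each remaining group of constraints in turn.

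First I would invoke Proposition \ref{prop-TEG} on the given MAPF solution: for every agent $a_i$ its path from $s(a_i)$ to $g(a_i)$ of length at most $T$ corresponds to a directed path $\pi_i$ in $\mathit{TEG}_i$ from $(s(a_i),0)$ to $(g(a_i),T)$, which survives the pruning to $\mathit{MDD}_i$ because all nodes of $\pi_i$ are reachable and co-reachable by construction. I would then regard $\pi_i$ as the sought $\mathit{DAG}_i$ and define the assignment by $\mathcal{X}_{i,v_j}^t = \mathit{TRUE}$ iff $(v_j,t) \in \pi_i$. By the forward direction of the argument used in Proposition \ref{prop-dag}, this assignment satisfies constraints (\ref{eq-prop-1}), (\ref{eq-prop-2}) and (\ref{eq-prop-3}).

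Next I would check the cost-bounding part bundled with the agent propagation constraints: the auxiliary action variable at time step $t$ for agent $a_i$ becomes $\mathit{TRUE}$ exactly when $\pi_i$ performs a non-wait transition at $t$, so the cardinality-constraint encoding sees a total action count equal to the individual cost of $\pi_i$. Summed over agents this equals the sum-of-costs of the original MAPF solution, which is $\mathit{SoC}$ by assumption, so the upper bound is met. Finally, I would verify the conflict-elimination clauses accumulated in the $\mathit{conflicts}$ set across previous CEGAR iterations (clauses of the form $\neg \mathcal{X}_{i,v}^{t} \vee \neg \mathcal{X}_{j,v}^{t}$ and their edge analogues): because the input MAPF solution is, by definition, free of vertex and edge conflicts, for every such clause at least one of its literals is true under our assignment.

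The main obstacle I anticipate is conceptual rather than technical: one has to be careful that the non-refined model really is still a sound abstraction despite dropping the path consistency constraints (\ref{eq-cons-1}). The argument above sidesteps this by choosing an assignment coming from a single trajectory, which happens to satisfy (\ref{eq-cons-1}) too; the proof therefore never has to confront the fact that $\mathcal{F''}(\mathit{SoC})$ admits many additional DAG-shaped assignments that do not correspond to genuine paths. This is exactly why the implication only goes in one direction, matching the statement of the proposition.
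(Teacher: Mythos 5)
Your proposal is correct and follows essentially the same route as the paper: the paper justifies this proposition with the single observation (stated immediately before it) that an agent's path becomes a $\mathit{DAG}_i$ by attaching time indices to its vertices, which by Proposition~\ref{prop-dag} yields a satisfying assignment of $\mathcal{F''}(\mathit{SoC})$. You merely spell out the details the paper leaves implicit --- survival of the path under MDD pruning, the cardinality bound on the cost, and the conflict-elimination clauses accumulated in earlier refinements being satisfied because the given solution is conflict-free --- all of which are correct and strengthen rather than alter the argument.
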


The above propositions establish soundness of the NRF-SAT algorithm. In other words, if the algorithm terminates and returns an answer then it is a valid MAPF solution. However the termination needs a separate investigation.

\begin{proposition}
The abstraction refinement loop for a specified $\mathit{SoC}$ is executed by the NRF-SAT algorithm finitely many times.
\label{prop-term}
\end{proposition}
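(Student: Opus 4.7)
The plan is to exhibit a monotonically increasing quantity whose range is finite, and then conclude by a standard progress argument.

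First I would fix the sum-of-costs $\mathit{SoC}$ and observe that this induces a finite horizon $T$ for each agent's MDD, so the universe of all possible vertex conflicts $(a_i,a_j,v,t)$ and edge conflicts $(a_i,a_j,\{u,v\},t)$ is of finite cardinality, bounded roughly by $O(k^2(|V|+|E|)T)$. This provides the a priori finite pool from which the set $\mathit{conflicts}$ can draw.

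Next I would follow the accumulated set $\mathit{conflicts}$ along the inner loop on lines 11--22 of Algorithm \ref{alg-NRF-SAT}. An iteration either returns (line 18 or line 23) and we are done, or it reaches line 20 with $\mathit{conflicts'}\neq\emptyset$ and appends at least one fresh elimination clause to $\mathcal{F''}$. To conclude termination I need $\mathit{conflicts'}\cap\mathit{conflicts}=\emptyset$ in every iteration that continues, so that $|\mathit{conflicts}|$ strictly grows.

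The main obstacle is precisely this disjointness claim, because the refinement clauses are expressed on the DAG-level variables $\mathcal{X}_{i,v}^t$, whereas conflicts are detected one level up, on the post-processed paths. To bridge the gap, I would invoke Proposition \ref{prop-dag}: an extracted path for $a_i$ is embedded in $\mathit{DAG}_i$, so if the extracted path visits $v$ at time $t$, then necessarily $\mathcal{X}_{i,v}^t=\mathit{TRUE}$ in the satisfying assignment. Hence any previously added clause $\neg \mathcal{X}_{i,v}^t \vee \neg \mathcal{X}_{j,v}^t$ rules out every SAT model whose interpreted DAGs could yield that vertex conflict, and so the same conflict cannot recur in a later $\mathit{conflicts'}$; the edge-conflict case is symmetric, using the fact that edges of the DAG come from adjacent true $\mathcal{X}$-variables.

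With disjointness in hand, $|\mathit{conflicts}|$ is a strictly increasing integer upper-bounded by the finite universe of the first step, so the loop must terminate after finitely many iterations, establishing Proposition \ref{prop-term}.
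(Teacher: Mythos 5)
Your proof is correct, and it takes a slightly different route from the paper's. The paper argues over pairs of DAGs: the detected conflict pins down the specific pair $(\mathit{DAG}_i,\mathit{DAG}_j)$ that produced it, the added clause forbids that pair from ever co-occurring in a later satisfying assignment of $\mathcal{F''}(\mathit{SoC})$, and since there are only finitely many such pairs the loop must stop. You instead count the conflicts themselves, showing that the accumulated set $\mathit{conflicts}$ grows strictly in every iteration that does not return, and bounding it by the finite universe of tuples $(a_i,a_j,v,t)$. Both arguments hinge on the same bridging fact, which you make explicit and the paper leaves implicit: the extracted path is embedded in the interpreted $\mathit{DAG}_i$ (Proposition~\ref{prop-dag}), so a path visiting $v$ at time $t$ forces $\mathcal{X}_{i,v}^t=\mathit{TRUE}$, and hence the clause $\neg\mathcal{X}_{i,v}^t\vee\neg\mathcal{X}_{j,v}^t$ genuinely prevents the \emph{detected} conflict from recurring even though the clause lives at the DAG level and the conflict is detected at the path level. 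What your version buys is a quantitatively sharper conclusion: the number of refinement iterations per $\mathit{SoC}$ is polynomial, $O(k^2(|V|+|E|)T)$, whereas the paper's count of DAG pairs is finite but potentially exponential. What the paper's version buys is robustness to the bookkeeping of the $\mathit{conflicts}$ set: it never needs the disjointness claim $\mathit{conflicts}'\cap\mathit{conflicts}=\emptyset$, only that the one pair of DAGs just observed is newly excluded. Both are sound; yours is the stronger statement.
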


\begin{proof}
Each iteration of the abstraction refinement loop corresponds to a counterexample, a conflict between a pair of agents. This conflict appears between two paths extracted from a pair of DAGs say $\mathit{DAG}_i$ and  $\mathit{DAG}_j$. These two specific DAGs cannot be interpreted from the satisfying assignment of $\mathcal{F''}(\mathit{SoC})$ again in any of the next iterations of the abstraction refinement loop since the conflict elimination constraint forbids them to appear simultaneously, either $\mathit{DAG}_i$ or $\mathit{DAG}_j$ must be different. Since there is finitely many pairs of DAGs $\mathit{DAG}_i$ and  $\mathit{DAG}_j$ the algorithm either forbids them all during the refinements or terminates earlier.
\end{proof}

We are now ready to state the main theoretical property of the NRF-SAT algorithm.

\begin{theorem}
The NRF-SAT algorithm returns sum-of-costs optimal solution for a solvable input MAPF instance.
\end{theorem}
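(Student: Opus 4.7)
The plan is to decompose the theorem into three standard ingredients for a SATPlan-style optimality argument: (i) soundness of each returned solution, (ii) termination of the outer cost-search loop, and (iii) optimality of the first accepted cost. The cited Proposition~\ref{prop-dag}, its Corollary, Proposition~\ref{prop-term}, and the incompleteness proposition about $\mathcal{F''}(\mathit{SoC})$ will serve as the main building blocks, so that the proof essentially becomes a careful bookkeeping argument.

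I would begin with \textbf{soundness}. Suppose NRF-SAT-Bounded exits the inner loop through the return on lines~18--19 with some $\mathit{paths}$. By Proposition~\ref{prop-dag} and its Corollary, the $\mathit{DAG}_i$ interpreted from the satisfying assignment contains a directed path from $(s(a_i),0)$ to $(g(a_i),T)$ for every agent; the extraction step on line~16 returns exactly such a path. Because the validate step returned $\mathit{conflicts'} = \emptyset$, these paths are pairwise conflict-free in the sense of MAPF. The cost-bounding cardinality constraints baked into the agent propagation encoding bound the total number of non-wait action-variables set to $\mathit{TRUE}$ by $\mathit{SoC}$; since each extracted path uses only a subset of the $\mathit{TRUE}$ action-variables of its DAG, each path's cost is bounded and the sum-of-costs of the extracted solution is at most $\mathit{SoC}$. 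Hence the returned tuple is a valid MAPF solution of sum-of-costs at most $\mathit{SoC}$.

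Next I would handle \textbf{termination and optimality jointly}. Let $\mathit{SoC}^{*}$ be the optimal sum-of-costs of the input MAPF. For any $\mathit{SoC} < \mathit{SoC}^{*}$, the soundness step above forbids NRF-SAT-Bounded from returning $\mathit{paths}$ (such paths would contradict optimality of $\mathit{SoC}^{*}$); combined with Proposition~\ref{prop-term}, the inner loop must finish with $\mathit{UNSAT}$ after finitely many refinements, and the outer loop increments $\mathit{SoC}$. The critical case is $\mathit{SoC} = \mathit{SoC}^{*}$. Here I would take a fixed optimal solution and consider the \emph{canonical} assignment in which $\mathcal{X}_{i,v_j}^{t}$ is $\mathit{TRUE}$ exactly when agent $a_i$ occupies $v_j$ at time $t$ along that solution. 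This assignment corresponds to a degenerate $\mathit{DAG}_i$ (a single path), and I would verify that it satisfies (a) the agent propagation and boundary constraints~(\ref{eq-prop-1})--(\ref{eq-prop-3}), (b) the cost bound at $\mathit{SoC}^{*}$, and (c) every conflict-elimination clause of the form $\neg\mathcal{X}_{i,v}^{t} \vee \neg\mathcal{X}_{j,v}^{t}$ ever added during any iteration of the outer or inner loop, since in a conflict-free solution at most one of the two literals can fail. Thus $\mathcal{F''}(\mathit{SoC}^{*})$ stays satisfiable throughout the entire sequence of refinements for $\mathit{SoC}^{*}$, ruling out the $\mathit{UNSAT}$ exit on line~23. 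By Proposition~\ref{prop-term} the inner loop halts after finitely many iterations, so it must halt via the conflict-free exit, at which point soundness guarantees that the returned solution is a valid MAPF plan of sum-of-costs $\leq \mathit{SoC}^{*}$; combined with optimality of $\mathit{SoC}^{*}$, equality holds.

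The \textbf{main obstacle} is the invariance step (c): showing that no refinement constraint generated along the search --- including those inherited through the $\mathit{conflicts}$ parameter across successive values of $\mathit{SoC}$ --- can exclude the canonical assignment of a fixed optimal solution. This requires a small inductive argument over refinement steps, using the fact that each elimination clause is parameterised by an agent pair, a vertex (or edge), and a time step at which two agents were simultaneously present in some extracted pair of paths, and that a conflict-free solution by definition never places two agents at such a configuration. Edge conflicts are handled by the same template applied to the auxiliary $\mathcal{E}$-style variables. Once this invariant is in place, termination, soundness, and optimality snap together as above, yielding the theorem.
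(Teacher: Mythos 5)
Your proposal is correct and follows the same skeleton as the paper's proof: termination of each bounded refinement loop via Proposition~\ref{prop-term}, and optimality of the first sum-of-costs at which the loop exits with a solution via monotonicity of bounded solvability. You additionally spell out the one step the paper leaves implicit in its incompleteness proposition for $\mathcal{F''}(\mathit{SoC})$ --- namely that the canonical assignment induced by a fixed optimal solution survives every conflict-elimination clause added during refinement, so the loop at the optimal cost cannot exit with $\mathit{UNSAT}$ --- which makes your version more rigorous but not a different argument.
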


\begin{proof}
For a solvable MAPF instance, the NRF-SAT finds the first sum-of-costs $\mathit{SoC}$ for which the abstraction refinement loop finishes with a solution since all previous loops are guaranteed to terminate due to Proposition \ref{prop-term}. Due to monotonicity of solvability of bounded MAPF with respect to the sum-of-costs, this solution is optimal. 
\end{proof}

\begin{figure}[h]
    \centering
    \includegraphics[trim={3.0cm 23.0cm 4.5cm 3.0cm},clip,width=0.5\textwidth]{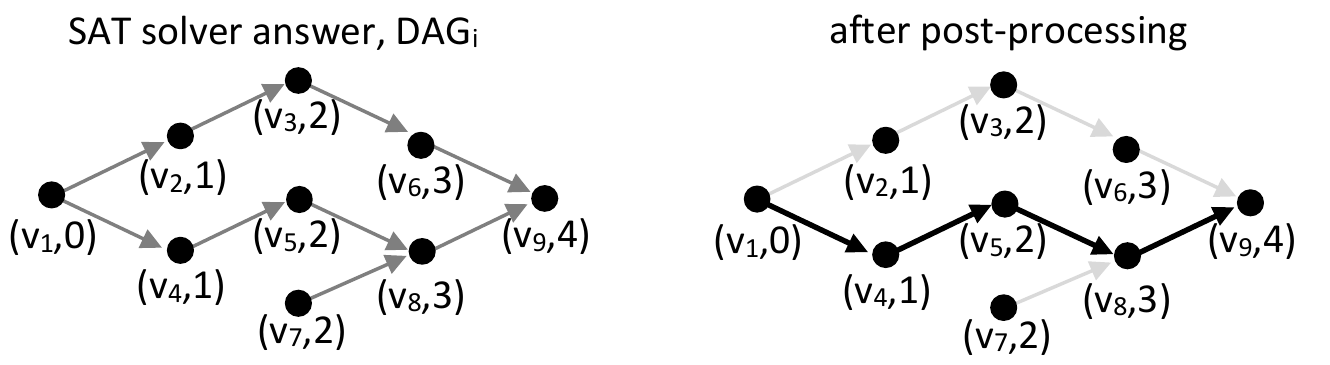}
    %\vspace{-0.8cm}
    \caption{Post-processing step in which path is extracted from DAG answered by the SAT solver.}
    \label{figure-path-extraction}
\end{figure}

\begin{figure*}[t]
    \centering
    \begin{subfigure}{0.33\textwidth}
       \includegraphics[trim={1.5cm 6.5cm 1.0cm 8.0cm},clip,width=1.0\textwidth]{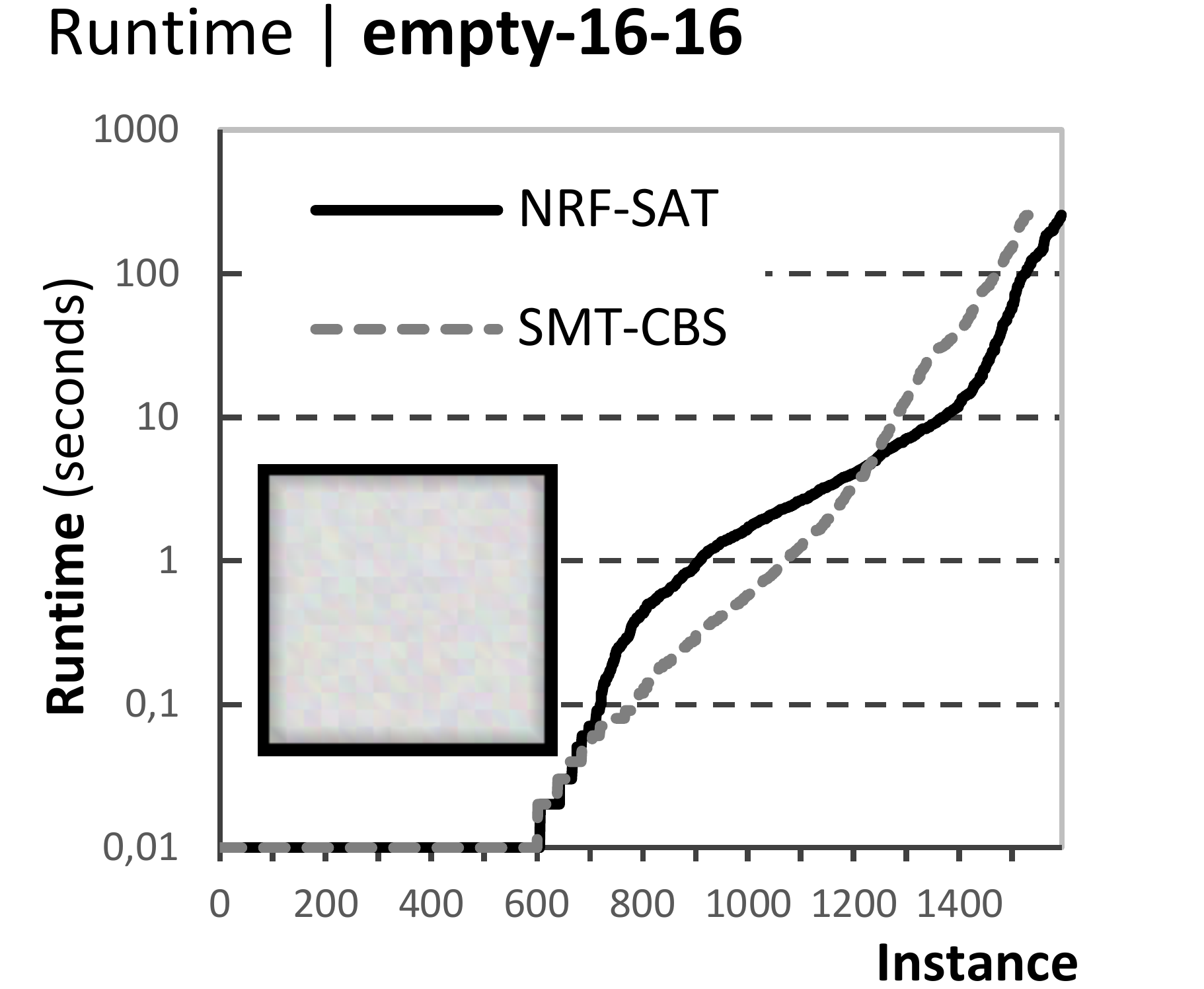}
    \end{subfigure}
    \begin{subfigure}{0.33\textwidth}
       \includegraphics[trim={1.5cm 6.5cm 1.0cm 8.0cm},clip,width=1.0\textwidth]{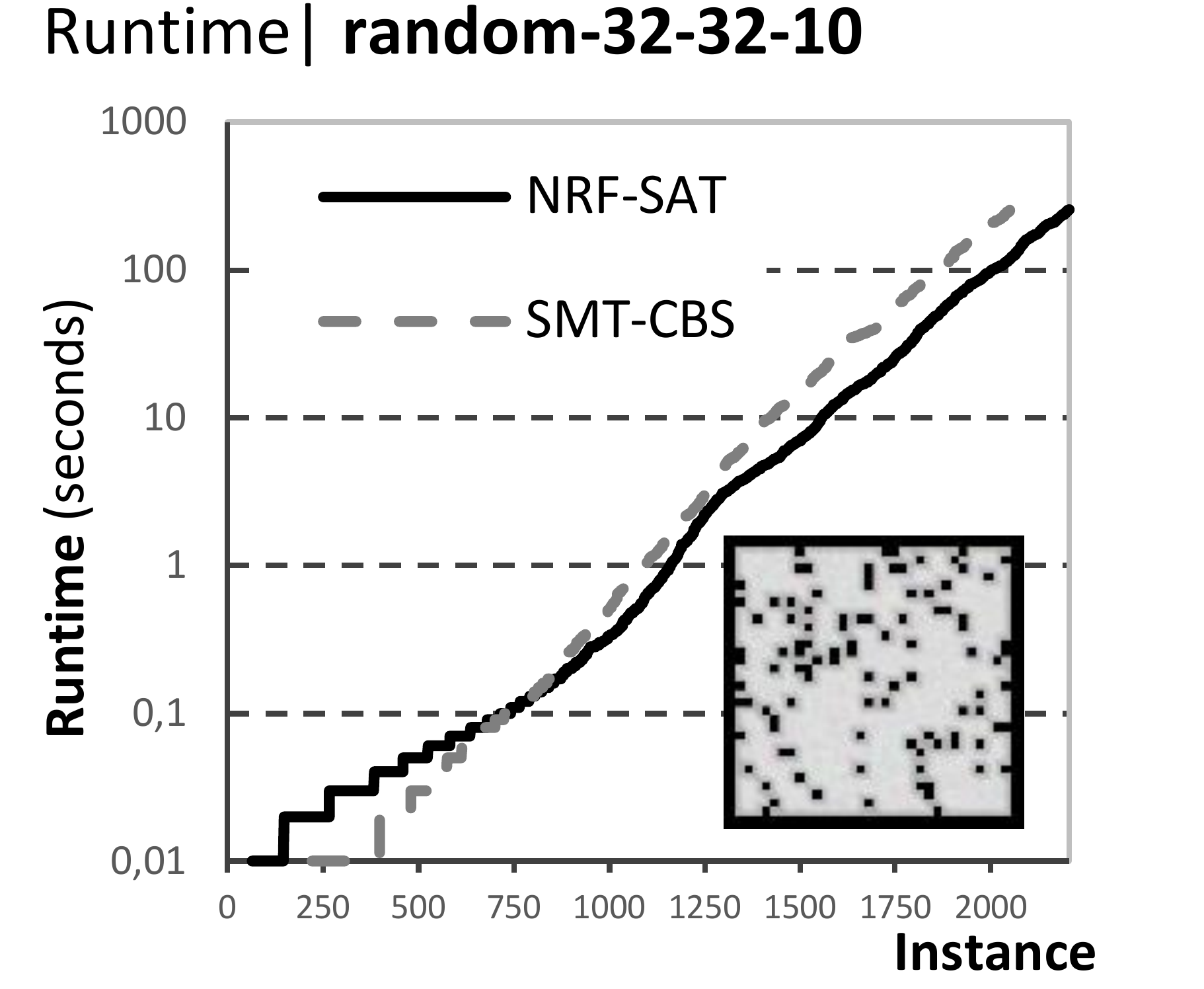}
    \end{subfigure}
    \begin{subfigure}{0.33\textwidth}
       \includegraphics[trim={1.5cm 6.5cm 1.0cm 8.0cm},clip,width=1.0\textwidth]{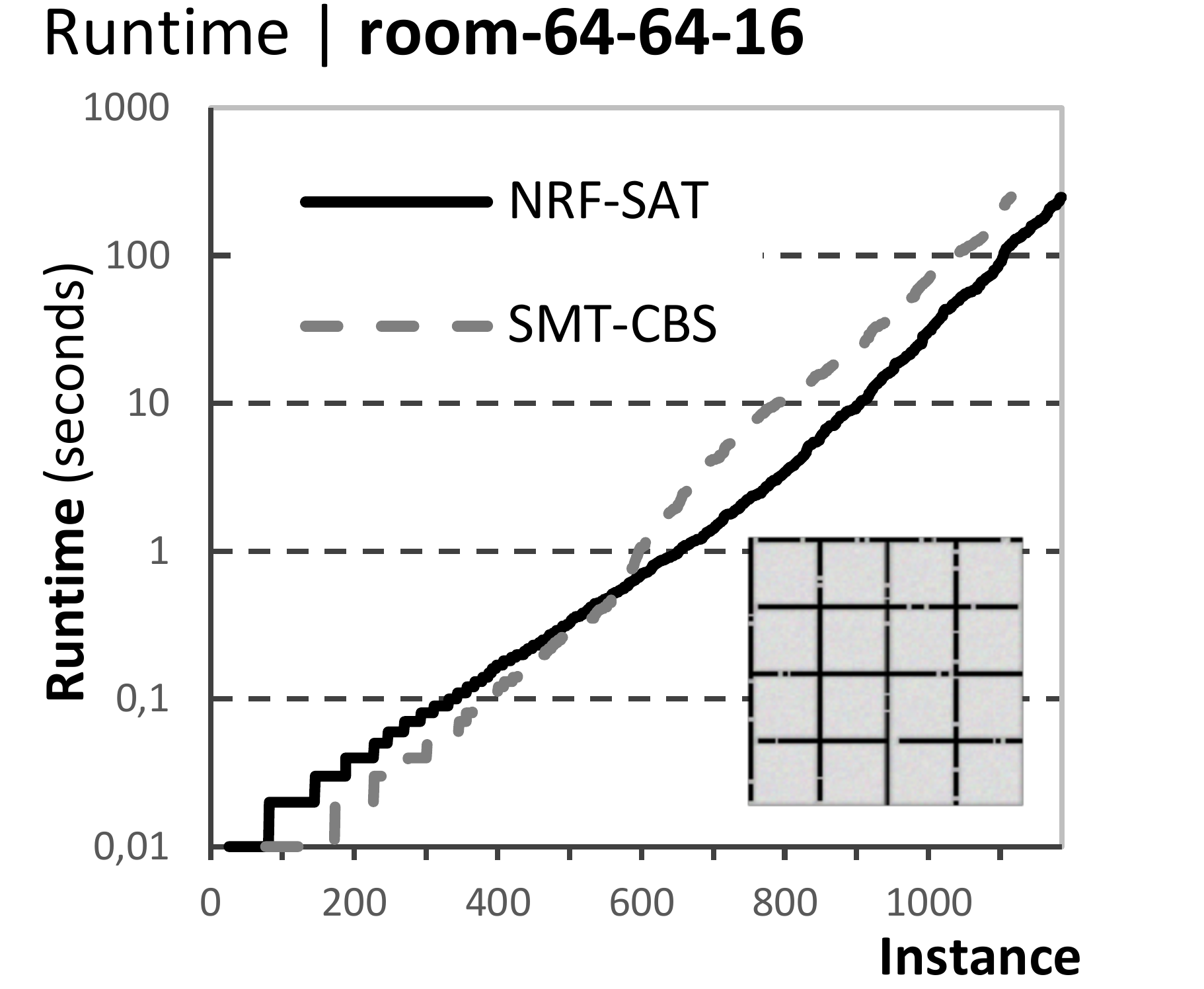}
    \end{subfigure}        
    \caption{Runtime comparison of two SAT-based MAPF solvers NRF-SAT and SMT-CBS. Cactus plots of runtimes for the solvers are shown (lower plot means better performance).}
    \label{expr-sat}
\end{figure*}

\section{Experimental Evaluation}

We performed an experimental evaluation of NRF-SAT on a number of MAPF benchmarks from \texttt{movingai.com} \cite{DBLP:journals/tciaig/Sturtevant12}. We compared NRF-SAT against SAT-based SMT-CBS \cite{DBLP:conf/ijcai/Surynek19} and CSP-based LazyCBS \cite{DBLP:conf/aips/GangeHS19} which are the solvers using similar compilation-based approach to MAPF.

\subsection{Benchmarks and Setup}

We implemented NRF-SAT in C++ via reusing the code of the original implementation of SMT-CBS. Both SAT-based MAPF solvers are built on top the Glucose 3 SAT solver \cite{DBLP:conf/ijcai/AudemardS09,DBLP:journals/ijait/AudemardS18} still ranking among top performing SAT solvers according to relevant competitions \cite{DBLP:journals/ai/FroleyksHIJS21}. The extraction of agent's path from DAG is implemented as a simple breadth-first search.

The important feature of the Glucose SAT solver is that it provides an interface for adding clauses incrementally that is employed during abstraction refinements. After refining the formula being answered by the SAT solver with new clauses, the solving process does not need to start from scratch. Instead the learned state of the solver is utilized in its run after the formula refinement which usually speeds up the process.

As of LazyCBS, we used its original implementation in C++. LazyCBS is built on top of the Geas CSP solver that supports lazy clause generation \cite{DBLP:conf/cpaior/Stuckey10}, a feature used by LazyCBS to eliminate MAPF conflicts lazily.

To obtain instances of various difficulties we varied the number of agents from 1 to $K$, where $K$ is the maximum number of agents for which at least one solver is able to solve some instance in the given time limit of 300 seconds (5 minutes). $K$ varied from approximately 80 agents to 120 agents depending on the benchmark map. For each number of agents, we generated 25 instances according to random scenarios provided on \texttt{movingai.com} (for each benchmark map we generated $25 \times K$ instances, i.e. approximately 2500 MAPF instances per map).

All experiments were run on a system consisting of Xeon 2.8 GHz cores, 32 GB RAM per solver instance, running Ubuntu Linux 18 \footnote{To provide reproducibility of presented results the complete source code of NRF-SAT is available on \texttt{https://github.com/surynek/boOX}.}. 

\begin{figure}[h]
    \centering
    \includegraphics[trim={2.4cm 20.0cm 4.5cm 2.5cm},clip,width=0.48\textwidth]{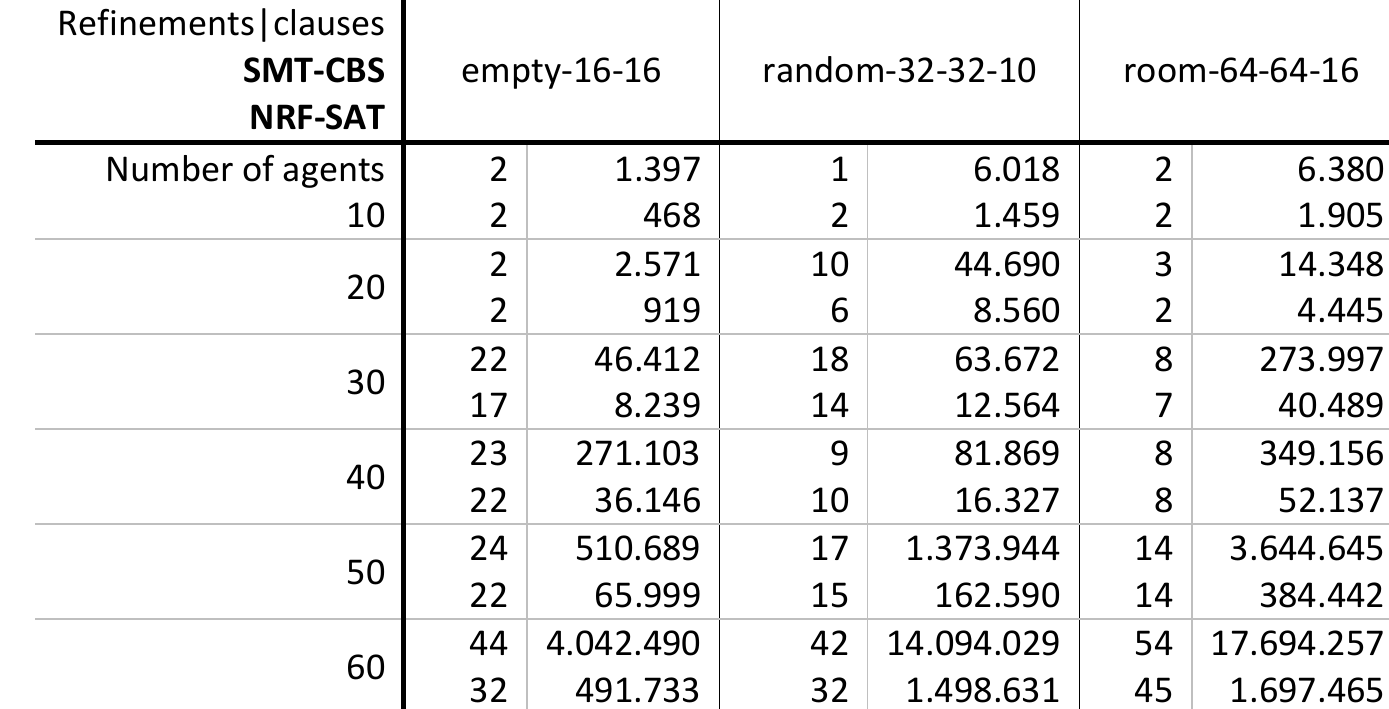}
    %\vspace{-0.8cm}
    \caption{The total number of clauses and refinements of SAT-based solvers SMT-CBS and NRF-SAT.}
    \label{tab-clauses-ref}
\end{figure}

%\begin{figure}[h]
%    \centering
%   \includegraphics[trim={2.4cm 20.0cm 5.0cm 2.5cm},clip,width=0.48\textwidth]{tab_refinements.pdf}
%   %\vspace{-0.8cm}
%    \caption{The total number of abstraction refinements made by SAT-based solvers SMT-CBS and NRF-SAT.}
%    \label{tab-refinements}
%\end{figure}

\begin{figure*}[t]
    \centering
    \begin{subfigure}{0.33\textwidth}
       \includegraphics[trim={1.5cm 6.5cm 1.0cm 8.0cm},clip,width=1.0\textwidth]{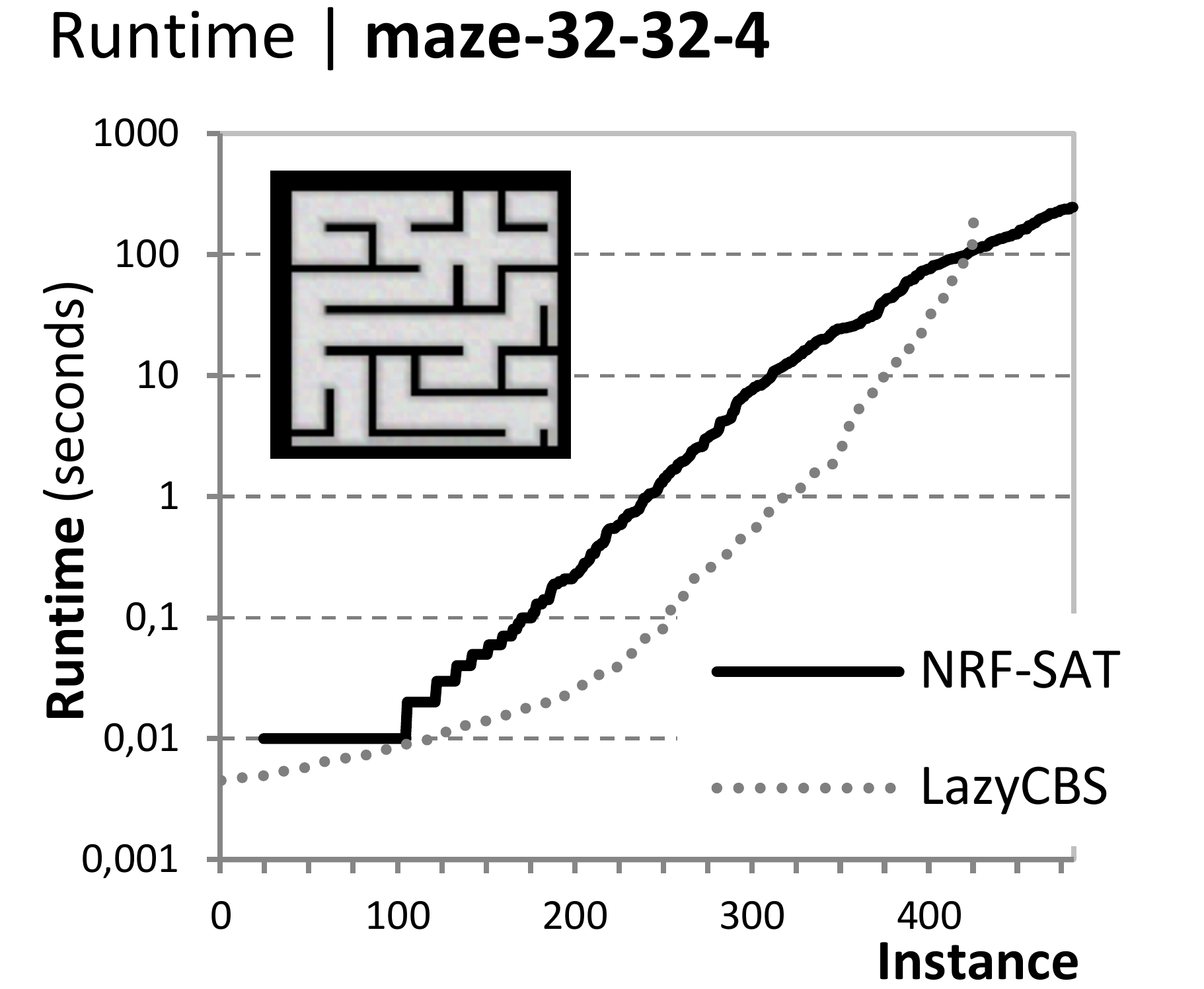}
    \end{subfigure}
    \begin{subfigure}{0.33\textwidth}
       \includegraphics[trim={1.5cm 6.5cm 1.0cm 8.0cm},clip,width=1.0\textwidth]{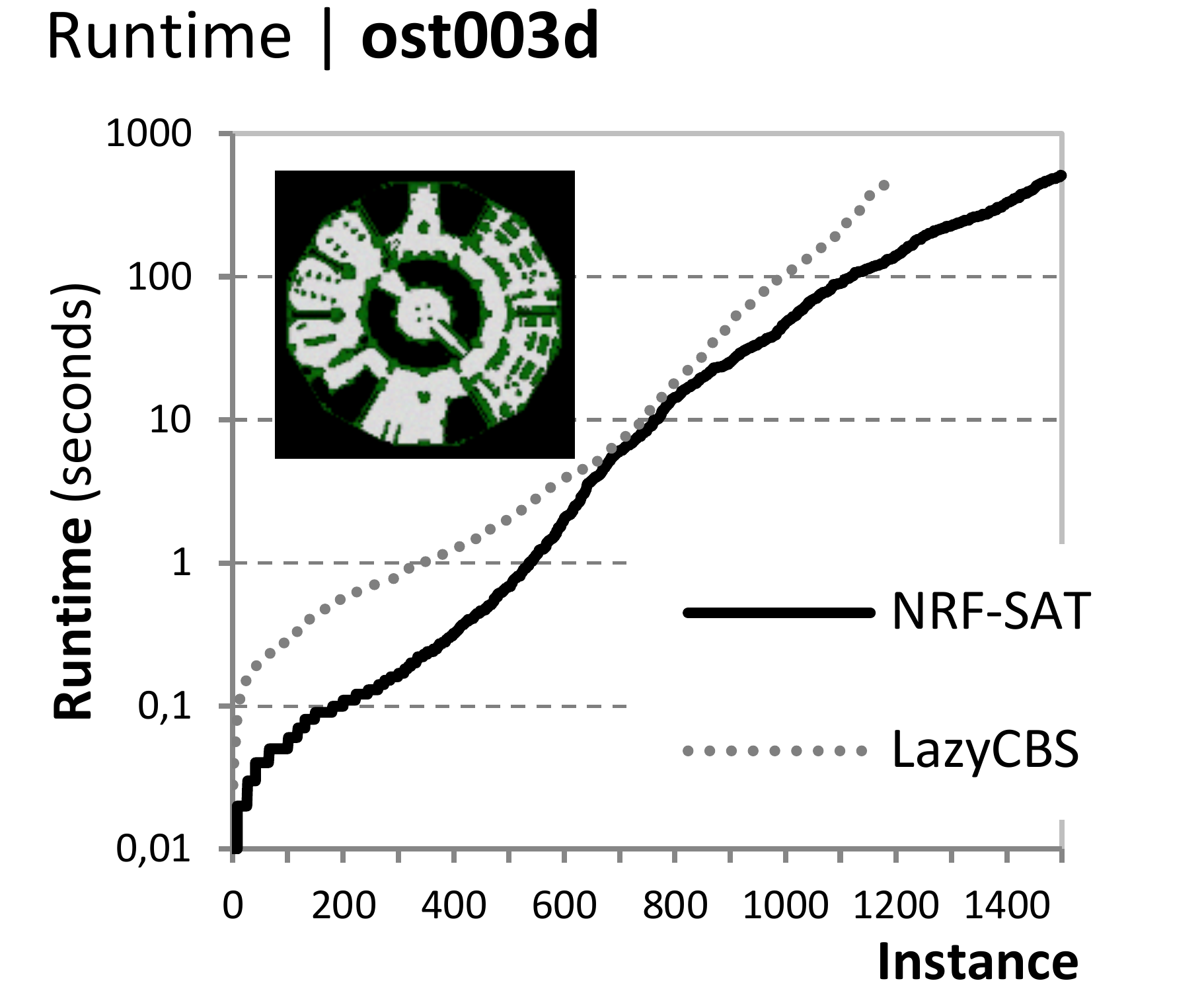}
    \end{subfigure}
    \begin{subfigure}{0.33\textwidth}
       \includegraphics[trim={1.5cm 6.5cm 1.0cm 8.0cm},clip,width=1.0\textwidth]{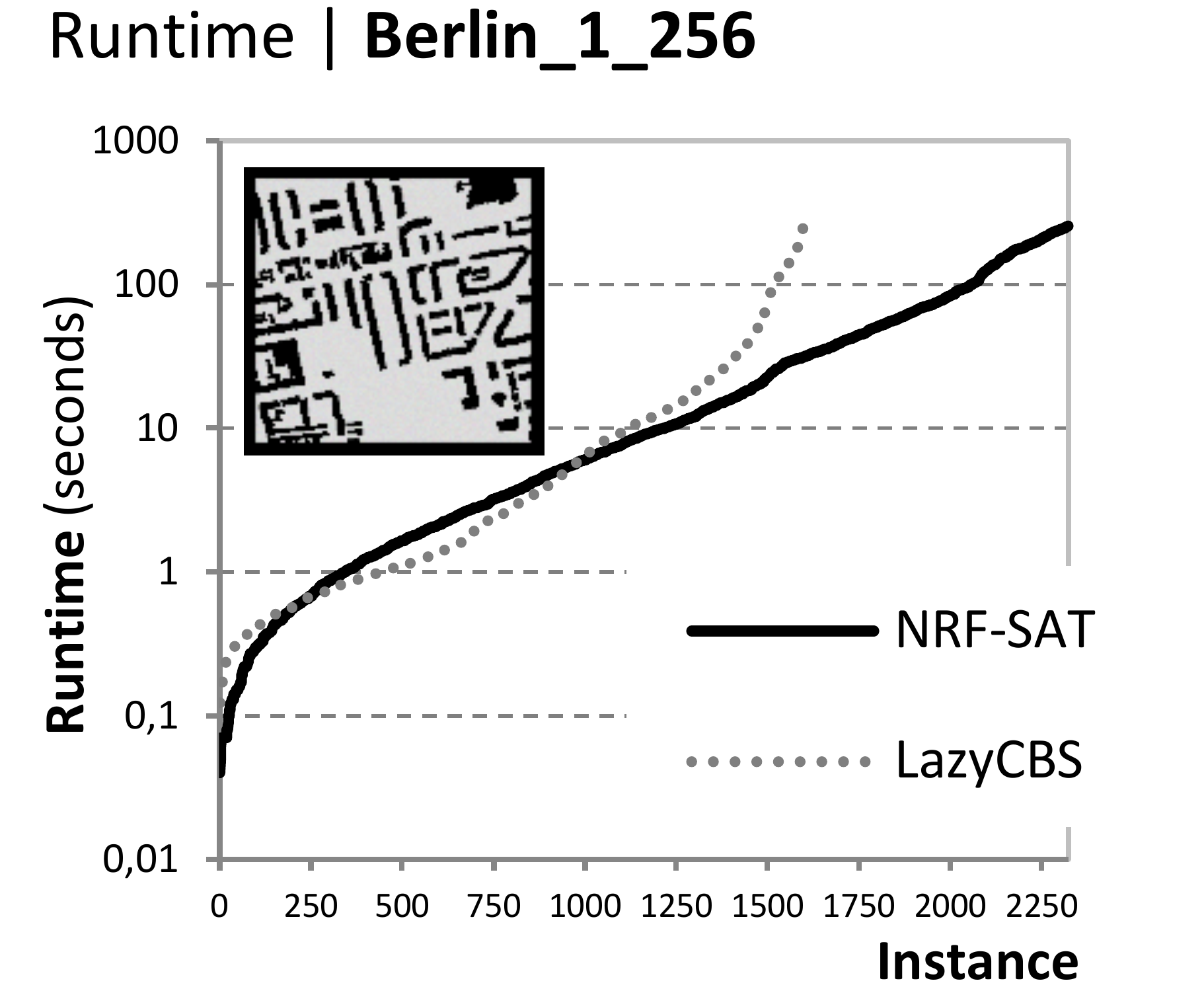}
    \end{subfigure}        
    \caption{Runtime comparison between NRF-SAT and LazyCBS. Cactus plots of runtimes for the solvers are shown (lower plot is better).}
    \label{expr-competitive}
\end{figure*}

\subsection{The Effect of Non-Refining}

We investigated the effect of non-refining w.r.t. the {\em path consistency} constraints in SAT-based solvers. The comparison of the SMT-CBS and NRF-SAT solvers in terms of the number of clauses being generated along the entire solving process is shown in Figure \ref{tab-clauses-ref} (this comparison is not relevant for LazyCBS, hence it is not included in the test).

The table shows the median number of clauses being generated by SMT-CBS for specific map and selected number of agents and the number of clauses generated by NRF-SAT for the same instance. In this test, small to medium sized maps have been used: \texttt{empty-16-16}, \texttt{random-32-32-10}, and \texttt{room-64-64-16}.

We can observe that NRF-SAT generates significantly fewer clauses than SMT-CBS. This trend is even more pronounced as the number of agents increases. Order of magnitude fewer clauses are generated by NRF-SAT for 60 and more agents on the presented benchmarks.

The explanation of this result that {\em path consistency} constraints encompass many at-most-one constraints that yields many clauses in most of its SAT representations \cite{DBLP:conf/soict/NguyenM15}.

We also report the total number of abstraction refinements for the same set of instances as reported for the number of clauses in Figure \ref{tab-clauses-ref}. Surprisingly non-refining often leads to a significant reduction of the number of abstraction refinements. Although this not a rule as sometimes increase in the number of abstractions in contrast to SMT-CBS can be observed in NRF-SAT, the reduction prevails.

One reason of this difference is that the choice of final paths is done by the SAT solver in SMT-CBS while in NRF-SAT the final choice is made by the path-processing procedure that extracts paths from DAGs in a fixed order which seems to be more suitable for abstraction refinements.

In addition to this, we tested the impact of leaving the abstraction non-refined on the overall performance of the SAT-based MAPF solver. Runtime results comparing SMT-CBS and NRF-SAT on the same set of maps: \texttt{empty-16-16}, \texttt{random-32-32-10}, and \texttt{room-64-64-16} is shown in Figure \ref{expr-sat}.

The runtime results are presented using cactus plots often used to present the results of SAT competitions \cite{DBLP:conf/aaai/BalyoHJ17}, that is runtimes for all instances are sorted so the $x$-th result along the horizontal axis represents the runtime for the $x$-th fastest solved instance by the given MAPF solver. The lower plot for the solver means better performance.

Instances with up to approximately 70 agents for \texttt{empty-16-16}, 110 agents for \texttt{random-32-32-10}, and 60 agents for \texttt{room-64-64-16} were solved by the solvers in the given time limit. SMT-CBS solved 1564, 2120, and 1152 and NRF-SAT solved 1633, 2266, and 1203 in total for \texttt{empty-16-16}, \texttt{random-32-32-10}, and \texttt{room-64-64-16} respectively, that is, significantly more instances for NRF-SAT. It need to be taken into account that the instances that were additionally solved by the NRF-SAT solver rank among the difficult ones.

We can observe in Figure \ref{expr-sat} that NRF-SAT dominates in harder instances while in easier instances SMT-CBS is sometimes better (most prominently on \texttt{empty-16-16}).

The explanation for the better performance of NRF-SAT is twofold. First, it generates significantly smaller formulae across entire abstraction refinement process hence the processing time itself is shorter. Second, the resulting formula with omitted path consistency constraints is easier to solve by the SAT solver which coupled with the fact the total number of abstraction refinements tends to be smaller in NRF-SAT leads to overall better performance.

\subsection{Competitive Comparison}

The competitive comparison of NRF-SAT and LazyCBS in terms of runtime is shown in Figure \ref{expr-competitive}. This comparison is focused on benchmarks that were identified by previous studies as those where compilation-based MAPF solvers perform well \cite{DBLP:conf/socs/KaduriBS21}. These benchmarks include those on mazes, city maps, and game maps. The representatives we selected for presentation are: \texttt{maze-32-32-4} (a maze map),  \texttt{ost003d} (a game map), \texttt{Berlin\_1\_256} (a city map). Additional experiments we made on other maps from these categories yield similar results.

Again, the runtime results are presented using cactus plots. The summary of the results is that LazyCBS solved 527, 1193, and 1599 while NRF-SAT solved 582, 1335, and 2321 in total for \texttt{maze-32-32-4}, \texttt{ost003d}, and \texttt{Berlin\_1\_256} respectively, that is NRF-SAT solver significantly more instances where again these extra instances rank among difficult ones.

Close look at the results reveals that the general trend is that LazyCBS has slightly sharper increase in runtimes as instances are getting harder. This results in reaching the timeout by LazyCBS sooner while NRF-SAT can still solve the instances within the time limit. The advantage of NRF-SAT tends to be more significant as the size of the map grows which is surprising for the SAT-based MAPF solvers that are notorious to struggle on large maps.

The explanation for the better performance of NRF-SAT on the presented benchmarks is that non-refining in the CEGAR architecture is especially helpful when dealing with large maps where it leads to significantly smaller formulae than in previous SAT-based solvers. Moreover this combined with the rest of the CEGAR architecture leads to a competitive solver.

There are many other benchmarks where LazyCBS performs significantly better than NRF-SAT. Hence we do not claim that NRF-SAT is state-of-the-art solver for MAPF. However, as we report, there are several important domains where NRF-SAT achieves competitive performance which shows the importance of non-refined abstractions. Moreover, it is needed to take into account that NRF-SAT is in fact a vanilla solver for MAPF with no specific MAPF techniques such as {\em symmetry breaking} \cite{DBLP:conf/aips/0001GHS0K20} or {\em rectangle reasoning} \cite{DBLP:conf/ijcai/LiFB0K19} being used. Hence there is a potential that the SAT-based MAPF solvers can return among top performing MAPF solvers at least in certain domains and the CEGAR architecture with non-refined abstractions can contribute to this.

\section{Conclusion}
We proposed a novel solver called NRF-SAT for MAPF based on the CEGAR architecture and Boolean satisfiability that uses non-refined abstraction during the solving process.

Unlike previous uses of CEGAR in MAPF we not only eliminate conflicts between agents via abstraction refinements but we also further strengthen the initial abstraction. Particularly our new solver NRF-SAT omits large group of constraints in the initial abstraction and never makes refinements with respect to them which as we show can be mitigated by a fast post-processing step.

From a broader perspective, we not only apply the CEGAR architecture with non-refined abstractions for compilation-based MAPF solving but we also generalize the architecture itself. This generalization consists in finding a solution of a different, more general task, than is the original one using the target formalism, rather than finding a solution of the original task using the target formalism directly.

%for MAPF by adding answer post-processing step in the CEGAR-based architecture of the solver.
	
\section*{Acknowledgments}
This research has been supported by GA\v{C}R - the Czech Science Foundation, grant registration number 22-31346S.

%% The file named.bst is a bibliography style file for BibTeX 0.99c
\bibliographystyle{named}
\bibliography{references}

\end{document}